%% file: iclr2027_conference.tex
\documentclass{article}
\usepackage{iclr2027_conference,times}

\usepackage{amsmath,amsthm}
\input{math_commands.tex}

\newtheorem{proposition}{Proposition}[section]
\newtheorem{corollary}{Corollary}[section]

\usepackage{graphicx}
\usepackage{booktabs,tabularx,colortbl}
\usepackage{float}
\usepackage{wrapfig}
\usepackage{needspace}
\usepackage{pifont}
\usepackage{enumitem}
\usepackage{hyperref}
\usepackage{url}
\definecolor{paperlink}{HTML}{4B4E8D}
\hypersetup{colorlinks=true, citecolor=paperlink, linkcolor=paperlink, urlcolor=paperlink}

\usepackage[skins]{tcolorbox}
\definecolor{questionbg}{HTML}{F3EFFA}
\definecolor{questionborder}{HTML}{5E6E7A}
\newtcolorbox{researchquestion}{
    enhanced,
    colback=questionbg, colframe=questionborder,
    boxrule=0.65pt, arc=8pt,
    left=51pt, right=11pt, top=4.5pt, bottom=4.5pt,
    before skip=8pt, after skip=9pt,
    fontupper=\itshape,
    overlay={
        \draw[questionborder!70, line width=0.45pt,
              dash pattern=on 2.4pt off 2.5pt]
            ([xshift=43pt,yshift=6pt]frame.south west) --
            ([xshift=43pt,yshift=-6pt]frame.north west);
        \node[inner sep=0pt]
            at ([xshift=23pt]frame.west)
            {\includegraphics[width=22pt]{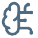}};
    }
}

\newcommand{\method}{{\fontfamily{lmtt}\mdseries\upshape\selectfont FutureWorlds}}
\newcommand{\memspo}{{\fontfamily{lmtt}\mdseries\upshape\selectfont MemSPO}}

\title{\raggedright \method{}: Learning Robotic World Models from Alternative Futures}

\author{%
\begin{minipage}[t]{\dimexpr\textwidth-2\tabcolsep\relax}
\raggedright\normalfont
{\bfseries
\mbox{Hao Wu\textsuperscript{\mdseries 1}},
\mbox{Shengju Qian\textsuperscript{\mdseries 2}},
\mbox{Weiyan Wang\textsuperscript{\mdseries 3,*}},
\mbox{Fan Xu\textsuperscript{\mdseries 4}},
\mbox{Fan Zhang\textsuperscript{\mdseries 2}},
\mbox{Yuanpeng He\textsuperscript{\mdseries 5}},
\mbox{Qingsong Wen\textsuperscript{\mdseries 6}},
\mbox{Yuxuan Liang\textsuperscript{\mdseries 1}\thanks{Corresponding authors.}}}\\[1pt]
\mbox{\textsuperscript{1}HKUST (GZ)}\quad
\mbox{\textsuperscript{2}CUHK}\quad
\mbox{\textsuperscript{3}Tencent}\quad
\mbox{\textsuperscript{4}USTC}\quad
\mbox{\textsuperscript{5}PKU}\quad
\mbox{\textsuperscript{6}Squirrel Ai Learning}
\end{minipage}%
}

\makeatletter
\patchcmd{\@maketitle}{\rule{\z@}{24pt}\@author}{\rule{\z@}{14pt}\@author}{}{\PackageError{futureworlds}{Title-author spacing patch failed}{}}
\patchcmd{\@maketitle}{\vskip 0.3in minus 0.1in}{\vskip 0.12in minus 0.02in}{}{\PackageError{futureworlds}{Author-abstract spacing patch failed}{}}
\makeatother

\iclrfinalcopy
\begin{document}
\raggedbottom

\maketitle
% Display authors without implying conference acceptance.
\lhead{Futureworlds}

\begin{abstract}
\input{sections/abstract}
\end{abstract}

\input{sections/introduction}
\input{sections/related_work}
\input{sections/method}
\input{sections/experiments}

\input{sections/conclusion}

\clearpage
\input{sections/statements}

\bibliography{iclr2027_conference}
\bibliographystyle{iclr2027_conference}

\clearpage
\appendix
\input{sections/appendix}

\end{document}

%% file: math_commands.tex
\usepackage{amsmath,amsfonts}
\usepackage{newtxmath}
\usepackage{bm}

\def\eqref#1{equation~\ref{#1}}
\def\1{\bm{1}}

\DeclareMathAlphabet{\mathsfit}{\encodingdefault}{\sfdefault}{m}{sl}
\SetMathAlphabet{\mathsfit}{bold}{\encodingdefault}{\sfdefault}{bx}{n}

%% file: sections/abstract.tex
Robotic world models predict action-conditioned future scenes, providing a foundation for understanding action outcomes.
However, turning alternative predictions into useful learning signals remains challenging: similar candidates limit informative quality comparisons, while diverging trajectories require persistent maintenance of their individual histories.
We introduce \method{}, a framework that unifies candidate construction, history maintenance, and learning from relative quality.
Built on a multimodal discrete autoregressive model, \method{} uses diverse beam search during reinforcement learning to construct candidate futures that balance confidence and diversity.
Candidate-specific bounded memory preserves scene states and ensures that generation and policy scoring use matching histories.
We further propose \memspo{} (Memory-Conditioned Search-Guided Policy Optimization), which converts video trajectory rewards into group-relative advantages to optimize the world model.
On RT-1, BridgeV2, and RoboCasa, \method{} reduces LPIPS for 32-frame predictions by 14.78\%, 20.84\%, and 9.12\%, respectively, relative to the strongest baseline on each dataset.
Under fixed evaluation configurations, only 200 \memspo{} updates further improve generation quality and support continued prediction beyond the training horizon.
Memory ablations, decoding sensitivity analysis, and optical-flow evaluation show that these gains extend beyond visual quality to more accurate motion prediction and more consistent object states.
Project page and code: \url{https://github.com/Alexander-wu/FutureWorlds}.

%% file: sections/introduction.tex
\section{Introduction}
\label{sec:introduction}

Robotic world models predict how scenes evolve in response to actions, providing a foundation for action-outcome prediction \citep{finn2016physical}, visual planning \citep{ebert2018visual,hafner2019planet}, behavior learning \citep{hafner2020dreamer}, and policy evaluation \citep{yin2026playworld,gigaworld2026}. Given the same observations, action sequence, and instruction, a model can propose multiple predictions of the future. These \emph{alternative futures} represent different hypotheses about the scene's evolution, with varying robot motions, object states, and prediction quality. Their differences offer a learning opportunity: by constructing useful alternatives and comparing their outcomes against recorded observations, a world model can learn from the predictions it generates.

% The teaser wraps into the actual introduction prose on the right.
% Only one input is displayed; all rollouts use two initial observations.
% Editable Python/SVG source: futureworlds_intro_20260920/build_figure.py.
% Allow the wrap figure to float to the next page without forcing an early text break.
% Let wrapfig calculate its height from the image and concise caption.

Most autoregressive world models learn from recorded trajectories through supervised training. Their reliance on ground-truth histories during training differs from the self-generated histories available at inference, allowing errors to accumulate along a rollout \citep{ross2011dagger,bengio2015sampling}. Token-level likelihood also does not directly optimize complete generated sequences \citep{ranzato2016sequence,rennie2017scst}. Figure~\ref{fig:intro_candidate_futures} illustrates this challenge: supervised fine-tuning (SFT) can leave predictions with insufficient robot motion or inaccurate poses. Reinforcement learning (RL) connects generated outputs to sequence-level rewards \citep{black2024ddpo}. In world modeling, RLVR-World uses decoded prediction metrics \citep{wu2025rlvr}, while RLIR introduces inverse-dynamics rewards for action following \citep{ye2025rlir}. These advances enable learning from trajectory feedback, but the resulting learning signal also depends on \emph{which futures the model compares}. More candidates alone do not guarantee useful comparative feedback. Effective post-training therefore requires informative candidate differences and history-consistent comparisons that turn alternative futures into reliable learning signals.

\needspace{300pt}
\begin{wrapfigure}{r}{0.53\textwidth}
    \vspace{-3pt}
    \setlength{\abovecaptionskip}{10.2pt}
    \centering
    \includegraphics[width=\linewidth]{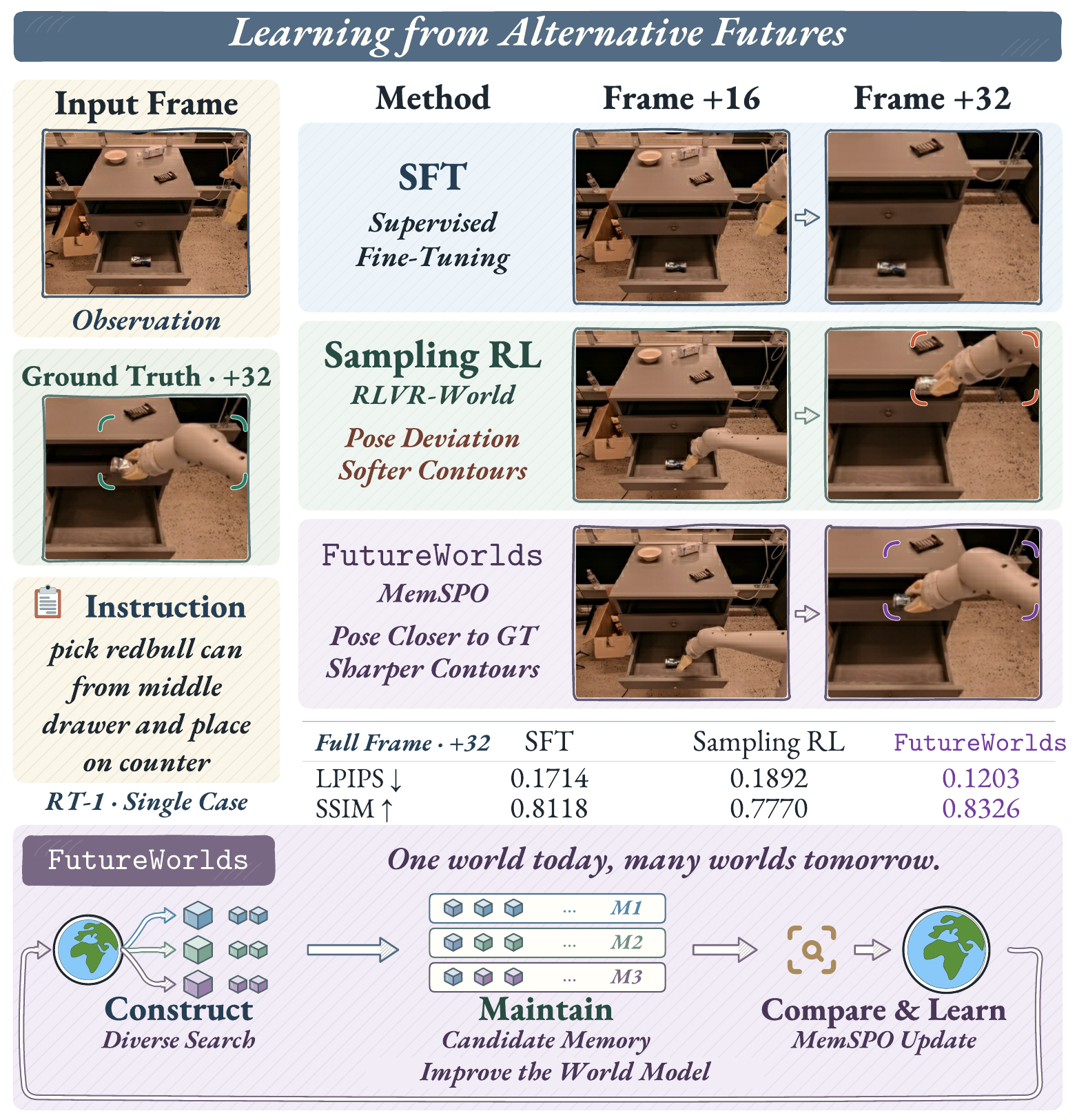}
    \caption{\textbf{\textit{Learning from alternative futures.}}
    \mbox{RT-1} comparison at $+32$; metrics use full images.
    \method{} constructs diverse futures, maintains their histories, and learns through \memspo{} by comparing trajectory rewards.}
    \label{fig:intro_candidate_futures}
\end{wrapfigure}

Learning from alternative futures requires coordinated candidate construction and history management. Candidate quality and diversity depend on the generation strategy \citep{holtzman2020degeneration}: independent sampling does not explicitly coordinate these properties across candidates, while conventional beam search can concentrate on nearly identical sequences \citep{vijayakumar2018diverse}. A useful candidate set therefore needs to balance model confidence with distinct predictions. As these predictions unfold, each branch develops its own robot and object states. Retaining the corresponding history becomes especially important when prediction extends beyond the pretraining horizon, motivating bounded memory for sustained generation \citep{dai2019transformerxl,wu2025spatialmemory,zheng2026memworld}. Learning introduces a further requirement: policy scoring must condition on the same retained history that produces each candidate. These considerations connect three roles---\emph{constructing} informative alternatives, \emph{maintaining} their individual histories, and \emph{learning} from their relative quality.

% End text wrapping before the full-width question box.
\par\WFclear
\vspace{-3pt}
\begin{researchquestion}
How can we construct informative alternative futures, maintain their individual histories, and learn from their relative quality to improve robotic world models?
\end{researchquestion}

We introduce \method{}, a framework that learns robotic world models by constructing, maintaining, and comparing alternative futures (Figure~\ref{fig:roboalign_m1_overview}). We instantiate the framework with a multimodal discrete autoregressive model \citep{oord2017vqvae,wu2024ivideogpt}, whose token probabilities support candidate expansion and trajectory scoring under visual, action, and language conditioning. \emph{To construct alternatives}, diverse beam search combines model likelihood with cross-candidate differences during RL training. \emph{To maintain their histories}, candidate-specific memory retains scene and state anchors alongside rolling context, supporting longer predictions within a bounded memory budget. Search and memory operate together as each candidate evolves, and generation and policy scoring use matching histories. \emph{To compare and learn}, \memspo{} (\underline{Mem}ory-Conditioned \underline{S}earch-Guided \underline{P}olicy \underline{O}ptimization) converts video trajectory rewards into group-relative advantages and updates the model through a search-guided surrogate based on GRPO \citep{shao2024deepseekmath} and clipped policy optimization \citep{schulman2017ppo}. Candidate quality differences thus become feedback for improving the model that generates them.

Our evaluation covers RT-1 \citep{brohan2023rt1}, BridgeData V2 \citep{walke2023bridge}, and RoboCasa \citep{nasiriany2026robocasa365}. On 32-frame predictions, \method{} reduces LPIPS by 14.78\%, 20.84\%, and 9.12\% relative to the strongest baseline on each dataset. Under fixed evaluation configurations, only 200 \memspo{} updates improve generation and support prediction beyond the training horizon. Memory ablations, decoding sensitivity analysis, and motion evaluation further assess historical context, robustness, and dynamic consistency. Our contributions are summarized as follows:
\begin{enumerate}[leftmargin=1.8em, labelsep=0.45em, topsep=4pt, itemsep=3pt, parsep=0pt]
    \item[\ding{182}] \emph{Learning from Alternative Futures.}
    We introduce \method{}, a framework that organizes model-generated futures into training experiences through coordinated candidate construction, history maintenance, and relative comparison.
    \item[\ding{183}] \emph{Search-Guided, History-Consistent Learning.}
    We propose \memspo{}, a GRPO-based post-training algorithm that combines diverse candidate search with candidate-specific memory and matches the histories used for generation and policy scoring.
    \item[\ding{184}] \emph{Experimental Evaluation.}
    We evaluate generation quality and long-horizon prediction, analyze memory and decoding robustness, and assess motion consistency and inference resource requirements. LPIPS decreases by 9.12--20.84\% versus the strongest baseline on each dataset.
\end{enumerate}

% End of introduction. No blank wrapfigure placeholder is needed.

%% file: sections/related_work.tex
\section{Related Work}
\label{sec:related-work}

\noindent\textbf{\emph{Robotic World Models.}}
Robotic world models predict action consequences for planning and policy learning. Early work uses action-conditioned video prediction for visual control \citep{finn2016physical,ebert2018visual}; PlaNet \citep{hafner2019planet} and Dreamer \citep{hafner2020dreamer} learn latent dynamics. UniSim \citep{yang2024unisim} combines language and low-level controls for visual simulation, while Vid2World \citep{huang2026vid2world} adapts video diffusion for interactive prediction. VideoGPT \citep{yan2021videogpt} and iVideoGPT \citep{wu2024ivideogpt} develop discrete autoregressive models, with iVideoGPT integrating observations, actions, and rewards through compressive tokenization. \method{} uses discrete autoregression to construct and compare alternative futures under shared visual, action, and language conditions.

\noindent\textbf{\emph{Memory for Long-Horizon World Modeling.}}
Transformer-XL \citep{dai2019transformerxl} propagates segment-level history, while StreamingLLM \citep{xiao2024streamingllm} retains initial attention sinks and recent context. StreamingT2V \citep{henschel2025streamingt2v} combines short- and long-term memory; FramePack \citep{zhang2025framepack} packs historical frames into a fixed context budget. \citet{wu2025spatialmemory} introduce geometry-grounded memory for scene revisits. Ctrl-World \citep{guo2026ctrlworld} and Mem-World \citep{zheng2026memworld} retrieve robotic observations through pose conditioning and spatiotemporal surfel indexing, respectively. \method{} maintains bounded, candidate-specific histories for long-horizon prediction and uses matching contexts for generation and policy scoring, preserving distinct conditioning as candidate futures diverge.

\noindent\textbf{\emph{Search-Guided Reinforcement Learning for World Models.}}
MIXER \citep{ranzato2016sequence} and SCST \citep{rennie2017scst} optimize sequence metrics through policy gradients; DDPO \citep{black2024ddpo} extends reward optimization to diffusion models. RLVR-World \citep{wu2025rlvr} rewards decoded prediction quality, while RLIR \citep{ye2025rlir} evaluates action following through inverse dynamics. Candidate construction determines which predictions provide this feedback. Diverse Beam Search \citep{vijayakumar2018diverse} reduces redundancy, Expert Iteration \citep{anthony2017expert} alternates search and policy learning, and BGRPO \citep{lee2025bgrpo} combines beam-generated candidates with rank-aware rewards for algebraic tasks. QWM \citep{dong2026qwm} searches predicted futures for robotic action selection and learns policies and values from real transitions. \method{} instead searches futures under a fixed action sequence. \memspo{} updates the video predictor from their relative quality using history-consistent scoring and a search-guided surrogate based on GRPO's group-relative advantages and clipped updates \citep{shao2024deepseekmath}.

%% file: sections/method.tex
\section{Method}
\label{sec:method}

Figure~\ref{fig:roboalign_m1_overview} presents \method{}, which learns from alternative futures through coordinated candidate construction, history maintenance, and relative comparison. A multimodal autoregressive world model first learns conditional video prediction through supervised training. During post-training, diverse search constructs candidate futures under shared visual, action, and language conditions. Each candidate retains its own history, with bounded persistent memory supporting continuation beyond the pretraining horizon. \memspo{} compares decoded trajectories through video rewards and converts their relative quality into world-model updates. Generation and policy scoring use matching candidate histories, while true future frames enter only reward computation during post-training.

\begin{figure}[!htbp]
    \centering
    \includegraphics[width=\linewidth]{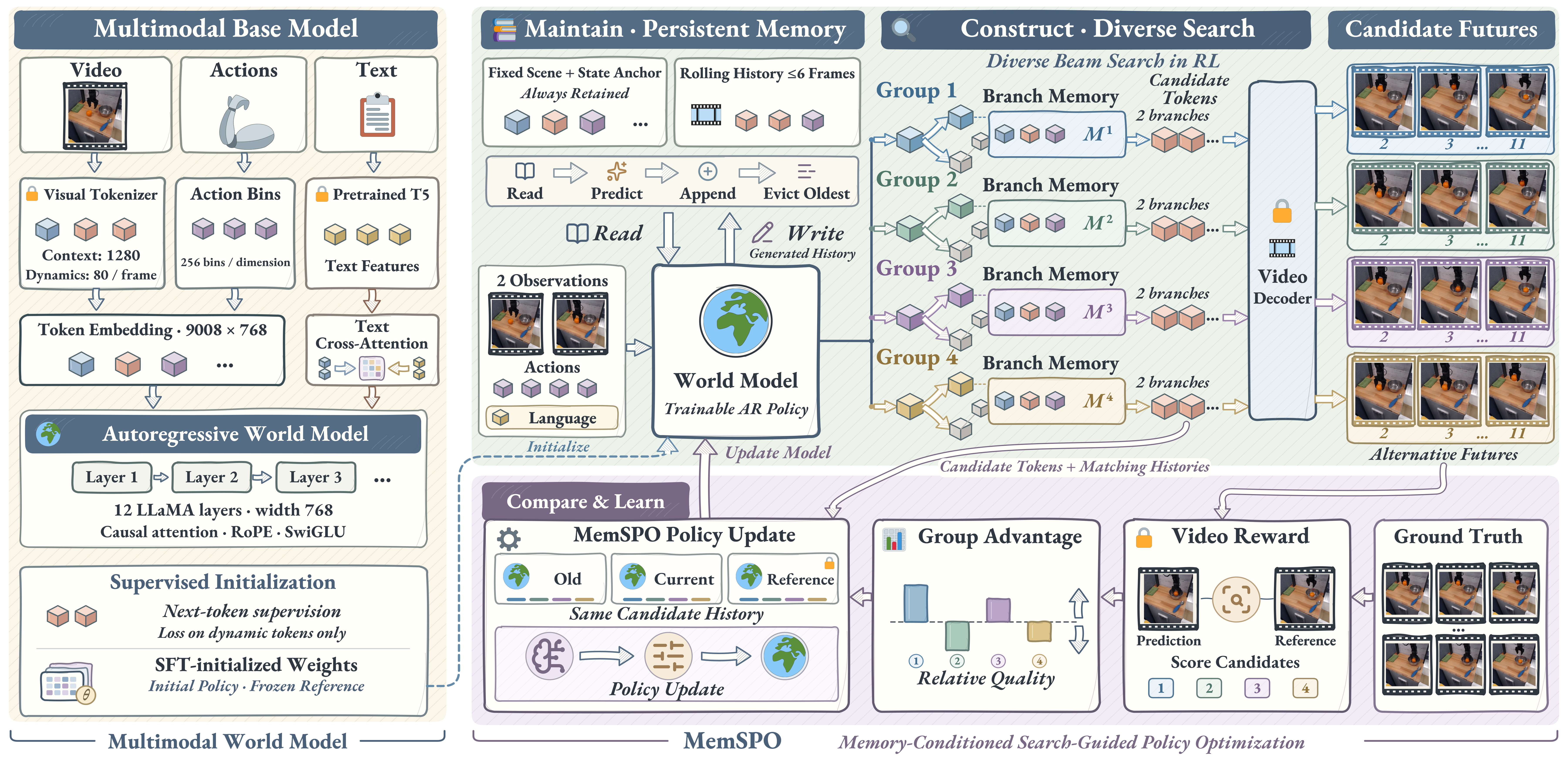}
    \caption{\textbf{\textit{Overview of \method{}.}}
    Supervised initialization trains the multimodal world model. \memspo{} searches alternative futures, maintains separate histories, and learns from relative trajectory quality. Generation and policy scoring use matching histories.}
    \label{fig:roboalign_m1_overview}
\end{figure}

% Allow the problem formulation to fill the preceding page naturally.
\subsection{Problem Formulation}
\label{sec:problem_formulation}

Robotic world modeling predicts how a scene evolves under actions. We use a GPT-style autoregressive model to represent alternative future trajectories conditioned on the same visual observations, action sequence, and optional language instruction. Let $\mathbf z_{1:H}$ denote the discrete visual sequence for $H$ future frames, with $N$ dynamic tokens per frame. Given initial visual conditions $\mathbf c_v$, robot actions $\mathbf a_{0:H-1}$, and an optional language instruction $\ell$, the world model factorizes the conditional distribution as
\begin{equation}
\pi_\theta(\mathbf z_{1:H}\mid\mathbf c_v,\mathbf a_{0:H-1},\ell)
=\prod_{t=1}^{H}\prod_{j=1}^{N}
\pi_\theta(z_{t,j}\mid\mathbf c_v,\mathbf z_{<t},\mathbf z_{t,<j},\mathbf a_{<t},\ell).
\label{eq:autoregressive_world_model}
\end{equation}
Here, $\mathbf z_{<t}$ contains tokens from preceding frames, $\mathbf z_{t,<j}$ is the within-frame prefix, and $\mathbf a_{<t}=\mathbf a_{0:t-1}$ contains actions available before frame $t$. In particular, $a_{t-1}$ conditions the transition to that frame. A visual decoder maps predicted tokens back to video. This factorization supports both candidate expansion and trajectory scoring. Our goal is to construct alternative predictions under the same conditions, maintain their histories within a bounded context, and learn from their relative quality.

\subsection{Multimodal Autoregressive World Model}
\label{sec:multimodal_world_model}

The multimodal model provides the representation for constructing and scoring candidate futures. Following iVideoGPT's conditional video compression~\citep{wu2024ivideogpt}, we use more context tokens for the initial scene and fewer dynamic tokens for subsequent changes, reducing repeated encoding of static information. Context and conditional dynamics encoders produce discrete representations through Finite Scalar Quantization (FSQ; \citealp{mentzer2024fsq}), and a visual decoder reconstructs frames:
\begin{equation}
C=Q_c\!\left(E_\phi^c(o_{\mathrm{ref}})\right),\qquad
\mathbf z_t=Q_d\!\left(E_\phi^d(o_t\mid o_{\mathrm{ref}})\right),\qquad
\tilde o_t=D_\phi(C,\mathbf z_t).
\label{eq:tokenizer_reconstruction}
\end{equation}
Here, $o_{\mathrm{ref}}$ is the initial reference frame, $\phi$ denotes tokenizer parameters, and $Q_c$ and $Q_d$ map continuous features to discrete tokens with straight-through gradients.

The tokenizer jointly optimizes pixel, perceptual, and adversarial losses:
\begin{equation}
\mathcal L_{\mathrm{tok}}(\phi)
=\lambda_{\mathrm{rec}}\mathcal L_1
+\lambda_{\mathrm{perc}}\mathcal L_{\mathrm{LPIPS}}
+\lambda_{\mathrm{adv}}\mathcal L_{\mathrm{GAN}}.
\label{eq:tokenizer_objective}
\end{equation}
Pixel and perceptual losses supervise both reference and dynamic frames; the adversarial term further constrains visual quality. The tokenizer remains frozen throughout world-model SFT and RL. Dataset-specific initialization and adaptation details are given in Appendix~\ref{app:tokenizer-details}.
% Discriminator implementation: hinge loss with gradient penalty.

As Figure~\ref{fig:roboalign_m1_overview} (left) shows, actions undergo per-dimension normalization and discretization, then form a temporally ordered causal sequence with visual tokens. A frozen T5 encoder~\citep{raffel2020t5} supplies language features through cross-attention. Supervised fine-tuning conditions on ground-truth histories and applies negative log-likelihood only to target dynamic tokens:
\begin{equation}
\mathcal L_{\mathrm{SFT}}(\theta)
=-\mathbb E_{\mathcal D}\!\left[
\frac{1}{HN}\sum_{t=1}^{H}\sum_{j=1}^{N}
\log\pi_\theta(z_{t,j}\mid\mathbf c_v,\mathbf z_{<t},\mathbf z_{t,<j},\mathbf a_{<t},\ell)
\right].
\label{eq:sft_objective}
\end{equation}
This objective jointly updates the autoregressive backbone, action embeddings, and text cross-attention modules. The resulting model initializes the post-training policy, with a frozen copy serving as the reference policy.

\subsection{Candidate-Specific Persistent Memory}
\label{sec:persistent_memory}

As candidate futures diverge, each accumulates its own prediction history. Persistent memory supports long-horizon continuation within a bounded context by retaining the initial scene, a state anchor, and recent predictions for each candidate. Let $\mathbf c_v=(C,\mathbf z_{\mathrm{anc}})$ and let $\tilde{\mathbf a}_s$ denote discrete action tokens. Before predicting frame $t$, candidate $i$ has memory
\begin{equation}
M_t^{(i)}
=\left[C,\mathbf z_{\mathrm{anc}},\tilde{\mathbf a}_0,
\left\{(\hat{\mathbf z}_s^{(i)},\tilde{\mathbf a}_s)\right\}_{s\in\mathcal I_t}\right],
\quad \mathcal I_t=\{\max(1,t-W),\ldots,t-1\}.
\label{eq:candidate_memory}
\end{equation}
The rolling history is temporally ordered and initially empty; its window length $W$ satisfies the token budget $|M_t^{(i)}|\le B$. Scene and state anchors persist throughout prediction. After each frame, memory appends its dynamic tokens and corresponding action, excluding older history from the model context once the window is full. Language remains available independently of this history.

During search, branches inherit their parents' memories, which are reordered together with candidate selection. To learn from each resulting future under the conditions that generate it, policy scoring uses the same memory and within-frame prefix:
\begin{equation}
\log\pi_\theta(\hat{\mathbf z}_{1:H}^{(i)}\mid\mathbf c_v,\mathbf a_{0:H-1},\ell)
=\sum_{t=1}^{H}\sum_{j=1}^{N}
\log\pi_\theta(\hat z_{t,j}^{(i)}\mid M_t^{(i)},\hat{\mathbf z}_{t,<j}^{(i)},\ell).
\label{eq:memory_conditioned_scoring}
\end{equation}
The old, current, and reference policies share these conditions, preserving the link between each candidate and its history during optimization. Supervised training uses ground-truth histories; RL and free rollouts use predicted histories, with true future frames reserved for reward computation.

\subsection{Memory-Conditioned Search-Guided Policy Optimization}
\label{sec:memspo}

\memspo{} turns alternative futures into post-training feedback: it constructs candidates through diverse search, maintains their histories as described above, and learns from their relative trajectory quality. The update uses GRPO-style group-relative advantages and a clipped policy surrogate~\citep{shao2024deepseekmath,schulman2017ppo}.

\noindent\textbf{\emph{Constructing alternative futures.}}
We maintain $G$ search groups with $b$ beams each. For prefix $y$ and next token $v$, group $g$ uses the diverse search score~\citep{vijayakumar2018diverse}:
\begin{equation}
S_g(y\mathbin{\oplus}v)
=S_g(y)+\log\pi_{\mathrm{old}}(v\mid u(y))
-\lambda_{\mathrm{div}}d_g(v).
\label{eq:diverse_search}
\end{equation}
Here, $u(y)$ contains the branch's memory and multimodal conditions, and $d_g(v)$ measures the normalized frequency of $v$ selected by earlier groups at the same position. Each group retains its $b$ highest-scoring branches and returns one final trajectory, producing $G$ videos under the same input conditions. The penalty encourages token-level differences across groups and affects selection only; policy scoring uses unmodified probabilities.

\noindent\textbf{\emph{Comparing trajectory quality.}}
The frozen visual decoder converts candidate tokens into videos. Following RLVR-World~\citep{wu2025rlvr}, our base reward combines pixel error and perceptual distance~\citep{zhang2018lpips}:
\begin{equation}
R_i=-\frac{1}{H}\sum_{t=1}^{H}
\left[\operatorname{MAE}(\hat o_t^{(i)},o_t)
+\operatorname{LPIPS}(\hat o_t^{(i)},o_t)\right].
\label{eq:trajectory_reward}
\end{equation}
For either reward design (Appendix~\ref{app:reward-design}), we normalize scores within each input:
\begin{equation}
A_i=\frac{R_i-\bar R}{\operatorname{Std}(R_1,\ldots,R_G)+\varepsilon},
\qquad \bar R=\frac{1}{G}\sum_{i=1}^{G}R_i.
\label{eq:relative_advantage}
\end{equation}
All dynamic tokens in a trajectory share $A_i$. True future frames enter reward computation only, never candidate memory. Video decoding and reward computation carry no gradients.

\noindent\textbf{\emph{Learning under matching histories.}}
The current, old, and frozen reference policies share each candidate's generation condition $u_{t,j}^{(i)}=(M_t^{(i)},\hat{\mathbf z}_{t,<j}^{(i)},\ell)$. The token probability ratio is
\begin{equation}
\rho_{t,j}^{(i)}(\theta)
=\frac{\pi_\theta(\hat z_{t,j}^{(i)}\mid u_{t,j}^{(i)})}
{\pi_{\mathrm{old}}(\hat z_{t,j}^{(i)}\mid u_{t,j}^{(i)})}.
\label{eq:policy_ratio}
\end{equation}
\memspo{} maximizes the surrogate objective
\begin{equation}
\mathcal J_{\mathrm{MemSPO}}(\theta)
=\frac{1}{GHN}\sum_{i=1}^{G}\sum_{t=1}^{H}\sum_{j=1}^{N}
\left[\ell_{\mathrm{clip}}\!\left(\rho_{t,j}^{(i)}(\theta),A_i\right)
-\beta\kappa_{t,j}^{(i)}(\theta)\right].
\label{eq:memspo_objective}
\end{equation}
Here, $\ell_{\mathrm{clip}}$ is the clipped policy term, including dual clipping for negative advantages. The reference regularizer is $\kappa=q-\log q-1$, where $q=\pi_{\mathrm{ref}}/\pi_\theta$ evaluates the same token under the same history.

Each iteration constructs fresh candidates and performs one world-model update, transferring their relative-quality feedback into the model parameters. The visual tokenizer, text encoder, and memory rules remain fixed. Beam selection changes the candidate distribution, so this is a search-guided surrogate without an unbiased on-policy gradient guarantee. We analyze how candidate construction shapes the group-relative reward gradient and bound its magnitude under candidate collapse (Appendix~\ref{app:theory}, Proposition~\ref{prop:pairwise-update} and Corollary~\ref{cor:candidate-collapse}).
% Implementation detail retained for later appendix expansion:
% v=min{rho*A,clip(rho,1-epsilon,1+epsilon)*A}; for A<0,
% the inherited dual-clipping term is max{v,c_dual*A}, c_dual>1.
% The reference penalty uses numerical clipping; negligible reward variance
% yields zero advantages. One update is applied per fresh candidate set;
% no gain is attributed to clipping being active in this configuration.

%% file: sections/experiments.tex
\section{Experiments}
\label{sec:experiments}

% Compact, consistent figure/table spacing for experiments and the appendix.
\setlength{\textfloatsep}{10pt plus 2pt minus 2pt}
\setlength{\floatsep}{8pt plus 2pt minus 2pt}
\setlength{\intextsep}{8pt plus 2pt minus 2pt}

Our experiments address four research questions:
\mbox{\ding{182}\,\textbf{\textit{RQ1}}}~How well does FutureWorlds predict future observations?
\mbox{\ding{183}\,\textbf{\textit{RQ2}}}~Can MemSPO improve generation with few post-training updates?
\mbox{\ding{184}\,\textbf{\textit{RQ3}}}~Do generation gains translate into more accurate motion and dynamics?
\mbox{\ding{185}\,\textbf{\textit{RQ4}}}~Does memory maintenance improve prediction beyond the training horizon?

\subsection{Experimental Setup}
\label{sec:experimental-setup}

\noindent\textbf{\textit{Baselines.}}
We compare four baseline categories: \mbox{\ding{70}}~video generation models: FitVid~\citep{babaeizadeh2021fitvid} and iVideoGPT~\citep{wu2024ivideogpt}; \mbox{\ding{70}}~action-conditioned world models: IRASim~\citep{zhu2025irasim}, Vid2World~\citep{huang2026vid2world}, Ctrl-World~\citep{guo2026ctrlworld}, DriftWorld~\citep{lu2026driftworld}, and WEAVER~\citep{jain2026weaver}; \mbox{\ding{70}}~RL post-training methods: RLVR-World~\citep{wu2025rlvr} and PersistWorld~\citep{bardhan2026persistworld}; and \mbox{\ding{70}}~adapted video foundation models: Wan2.2-TI2V-5B~\citep{wan2025wan}, Cosmos-Predict2.5-2B~\citep{nvidia2025cosmos25}, and DreamDojo-2B~\citep{gao2026dreamdojo}. Adaptation and evaluation details appear in Appendix~\ref{app:baseline-adaptation}.

\noindent\textbf{\textit{Datasets and Evaluation.}}
We evaluate action-conditioned video generation on RT-1~\citep{brohan2023rt1}, BridgeV2~\citep{walke2023bridge}, and RoboCasa~\citep{nasiriany2026robocasa365}, covering real and simulated manipulation. Table~\ref{tab:main-results} compares 32-frame predictions on 128 fixed held-out trajectories per dataset using PSNR, SSIM, and LPIPS~\citep{zhang2018lpips}. Methods share trajectories and target frames, with predictions resized to the reference resolution. Metric definitions and evaluation details appear in Appendix~\ref{app:baseline-adaptation}.

% Present the main quantitative and qualitative comparison together near the start of experiments.
\begin{figure}[!t]
\begin{minipage}{\linewidth}
\setlength{\intextsep}{4pt}
\input{tables/main_results}
\input{sections/rq1_qualitative}
\end{minipage}
\end{figure}

\noindent\textbf{\textit{Implementation Details.}}
We train \method{} in PyTorch on 16 NVIDIA H20 GPUs across two servers. Starting from SFT, \memspo{} uses AdamW with learning rate $10^{-6}$ and freezes the visual tokenizer and text encoder. Diverse beam search uses four groups of two beams, retaining one trajectory per group for reward comparison and updates. Main evaluations use a beam width of four and persistent memory. Pretrained initialization, SFT, and RL settings appear in Appendix~\ref{app:training-settings}.

\input{sections/rq1_results}

% Keep Table 2 and Figure 4 together in the next available top position.
\begin{figure}[!ht]
\begin{minipage}{\linewidth}
\begingroup
\setlength{\intextsep}{5pt}
\input{tables/matched_post_training}
\begin{figure}[H]
    \centering
    \setlength{\parskip}{0pt}
    \includegraphics[width=\linewidth]{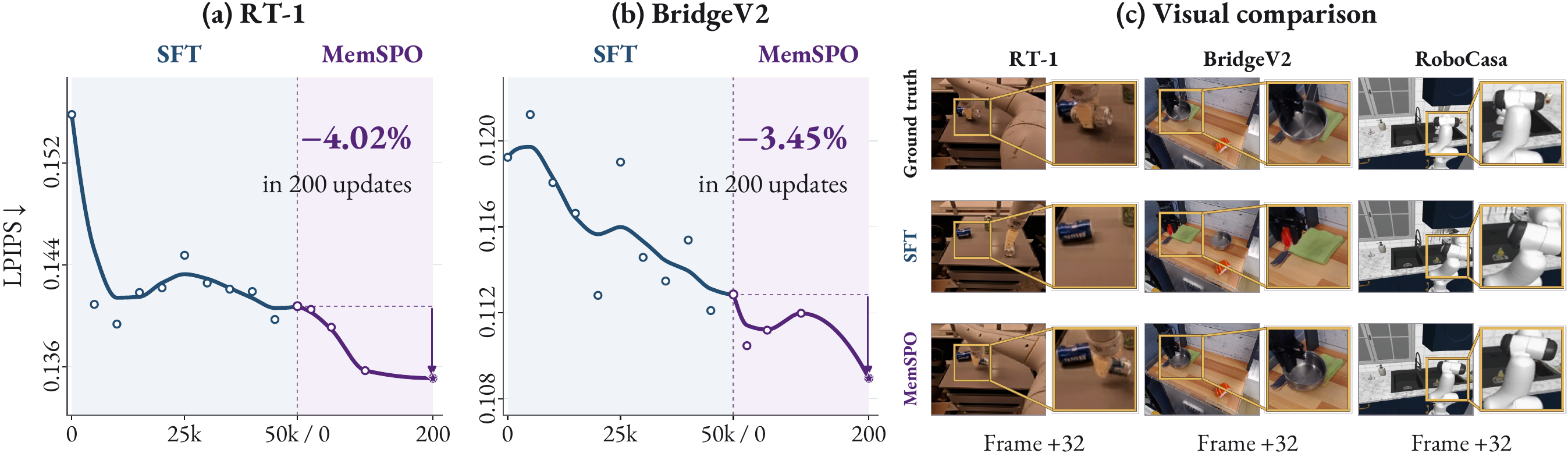}\par
    \vspace{3pt}
    \setlength{\abovecaptionskip}{2pt}
    \caption{\textbf{\textit{Post-training gains and qualitative improvements.}}
    (a,b) LPIPS during SFT and MemSPO training. (c) Frame-32 predictions with matched detail views on three datasets.}
    \label{fig:rq4-learning}
\end{figure}
\endgroup
\end{minipage}
\end{figure}

\input{sections/rq4_results}

\input{sections/motion_results}

\input{sections/memory_results}

%% file: tables/main_results.tex
% H32 results; RoboCasa Vid2World/DriftWorld/RLVR-World supplied by user 2026-09-23; provenance: main_results_sources.json.
% Do not insert tokenizer reconstruction scores into this forecasting table.
% Filled entries predict 32 future frames over 128 trajectories.
% All reported methods within a dataset must use the same held-out cohort.
\begingroup
\definecolor{resultink}{HTML}{292C42}
\definecolor{resultpurple}{HTML}{544178}
\definecolor{resultheader}{HTML}{E9E8F2}
\definecolor{resultstripe}{HTML}{F6F7FA}
\definecolor{resultours}{HTML}{EEE7F7}
\definecolor{rankfirst}{HTML}{E4D9F1}
\definecolor{ranksecond}{HTML}{F7E7D6}
\definecolor{rankthird}{HTML}{E0EDF4}
\newcommand{\rankone}[1]{\cellcolor{rankfirst}#1}
\newcommand{\ranktwo}[1]{\cellcolor{ranksecond}#1}
\newcommand{\rankthree}[1]{\cellcolor{rankthird}#1}
\definecolor{resultpending}{HTML}{7A7D8C}
\newcommand{\pendingresult}{\textcolor{resultpending}{\textendash}}
\newcommand{\pendingdataset}{\pendingresult & \pendingresult & \pendingresult}
\newcommand{\pendingrow}{\pendingdataset & \pendingdataset & \pendingdataset}
\newcommand{\resultcategory}[1]{%
  \addlinespace[1pt]
  \multicolumn{10}{@{}l}{\rule{0pt}{10.5pt}\textcolor{resultpurple}{\bfseries\itshape #1}}\\[0.5pt]}
\newcommand{\resultmetrics}{%
  {\fontsize{8.5}{9.6}\selectfont\mbox{PSNR}} &
  {\fontsize{8.5}{9.6}\selectfont\mbox{SSIM}} &
  {\fontsize{8.5}{9.6}\selectfont\mbox{LPIPS}}}

\begin{table}[H]
\centering
\caption{Quantitative comparison of 32-frame action-conditioned video prediction (128 trajectories per dataset).
\mbox{Metrics:} PSNR$\uparrow$, SSIM$\uparrow$, and LPIPS$\downarrow$.
\colorbox{rankfirst}{1st}, \colorbox{ranksecond}{2nd}, and \colorbox{rankthird}{3rd} mark the top three scores; the last row reports relative gains over the runner-up.}
\label{tab:main-results}
\vspace{3pt}
% Local Garamond family; tabular lining figures keep future results aligned.
\fontencoding{T1}\fontfamily{EBGaramond-TLF}\selectfont
\fontsize{9.0}{10.5}\selectfont
\color{resultink}
\setlength{\tabcolsep}{1.2pt}
\renewcommand{\arraystretch}{1.06}
\setlength{\aboverulesep}{2pt}
\setlength{\belowrulesep}{2pt}
% Only method names determine the first-column width; categories span all 10 columns.
\begin{tabularx}{\linewidth}{@{}p{0.25\linewidth}*{9}{>{\centering\arraybackslash}X}@{}}
\toprule[0.95pt]
\rowcolor{resultheader}
\rule{0pt}{10.5pt}\textbf{Method}
 & \multicolumn{3}{c}{\textbf{RT-1}}
 & \multicolumn{3}{c}{\textbf{BridgeV2}}
 & \multicolumn{3}{c@{}}{\textbf{RoboCasa}}\\[-0.5pt]
\cmidrule(lr){2-4}\cmidrule(lr){5-7}\cmidrule(l){8-10}
 & \resultmetrics & \resultmetrics & \resultmetrics\\[-0.5pt]
\midrule[0.55pt]
\resultcategory{Video Generation Models}
\rowcolor{resultstripe} FitVid & 18.71 & 0.5916 & 0.4750 & 17.09 & 0.6072 & 0.5235 & 13.50 & 0.5422 & 0.5655\\
iVideoGPT & 15.96 & 0.5571 & 0.3833 & \rankthree{20.06} & \rankthree{0.7791} & 0.1899 & \ranktwo{16.67} & \rankthree{0.7187} & 0.2366\\
\resultcategory{Action-Conditioned World Models}
\rowcolor{resultstripe} IRASim & 16.83 & 0.5936 & 0.3474 & 17.39 & 0.7094 & 0.2524 & 13.55 & 0.5834 & 0.3706\\
Vid2World & 19.03 & 0.7210 & \rankthree{0.1960} & 18.92 & 0.7110 & 0.1880 & 15.40 & 0.7000 & \rankthree{0.2287}\\
\rowcolor{resultstripe} Ctrl-World & 19.11 & 0.7110 & 0.2522 & 18.38 & 0.6932 & 0.2621 & 15.55 & 0.7035 & 0.2686\\
DriftWorld & \ranktwo{20.54} & \rankthree{0.7410} & 0.2595 & \ranktwo{20.32} & 0.7429 & 0.2379 & \rankthree{16.50} & \ranktwo{0.7200} & 0.2891\\
\rowcolor{resultstripe} WEAVER & \rankthree{19.74} & \ranktwo{0.7617} & 0.2321 & 18.74 & 0.7447 & 0.2476 & 14.62 & 0.7136 & 0.2726\\
\resultcategory{RL Post-Training}
\rowcolor{resultstripe} RLVR-World & 19.47 & 0.7360 & \ranktwo{0.1820} & 19.06 & 0.7210 & \ranktwo{0.1780} & 15.80 & 0.7100 & \ranktwo{0.2017}\\
PersistWorld & 18.77 & 0.6995 & 0.2717 & 19.07 & 0.7154 & 0.2408 & 15.61 & 0.7112 & 0.2639\\
\resultcategory{Adapted Video Foundation Models}
\rowcolor{resultstripe} Wan2.2-TI2V-5B & 18.42 & 0.7058 & 0.2478 & 16.35 & 0.6889 & 0.2568 & 14.37 & 0.6906 & 0.2711\\
Cosmos-Predict2.5-2B & 17.77 & 0.6765 & 0.2889 & 18.98 & \ranktwo{0.7799} & \rankthree{0.1870} & 14.30 & 0.6802 & 0.2916\\
\rowcolor{resultstripe} DreamDojo-2B & 18.47 & 0.7152 & 0.2660 & 16.95 & 0.7461 & 0.2563 & 13.92 & 0.6349 & 0.3278\\
\addlinespace[2pt]
\midrule[0.6pt]
\rowcolor{resultours}
\rule{0pt}{12pt}\textcolor{resultpurple}{\bfseries FutureWorlds (ours)} & \rankone{\textbf{22.47}} & \rankone{\textbf{0.8026}} & \rankone{\textbf{0.1551}} & \rankone{\textbf{22.03}} & \rankone{\textbf{0.7968}} & \rankone{\textbf{0.1409}} & \rankone{\textbf{18.47}} & \rankone{\textbf{0.7659}} & \rankone{\textbf{0.1833}}\\[1pt]
\addlinespace[1pt]
\textcolor{resultpurple}{\itshape Gain over runner-up} & \textcolor{resultpurple}{+9.40\%} & \textcolor{resultpurple}{+5.37\%} & \textcolor{resultpurple}{+14.78\%} & \textcolor{resultpurple}{+8.42\%} & \textcolor{resultpurple}{+2.17\%} & \textcolor{resultpurple}{+20.84\%} & \textcolor{resultpurple}{+10.80\%} & \textcolor{resultpurple}{+6.38\%} & \textcolor{resultpurple}{+9.12\%}\\[1pt]
\bottomrule[0.95pt]
\end{tabularx}
\vspace{3.5pt}
\end{table}
\endgroup

%% file: sections/rq1_qualitative.tex
\begingroup
\setlength{\intextsep}{6pt}
\begin{figure}[H]
    \centering
    \includegraphics[width=\linewidth]{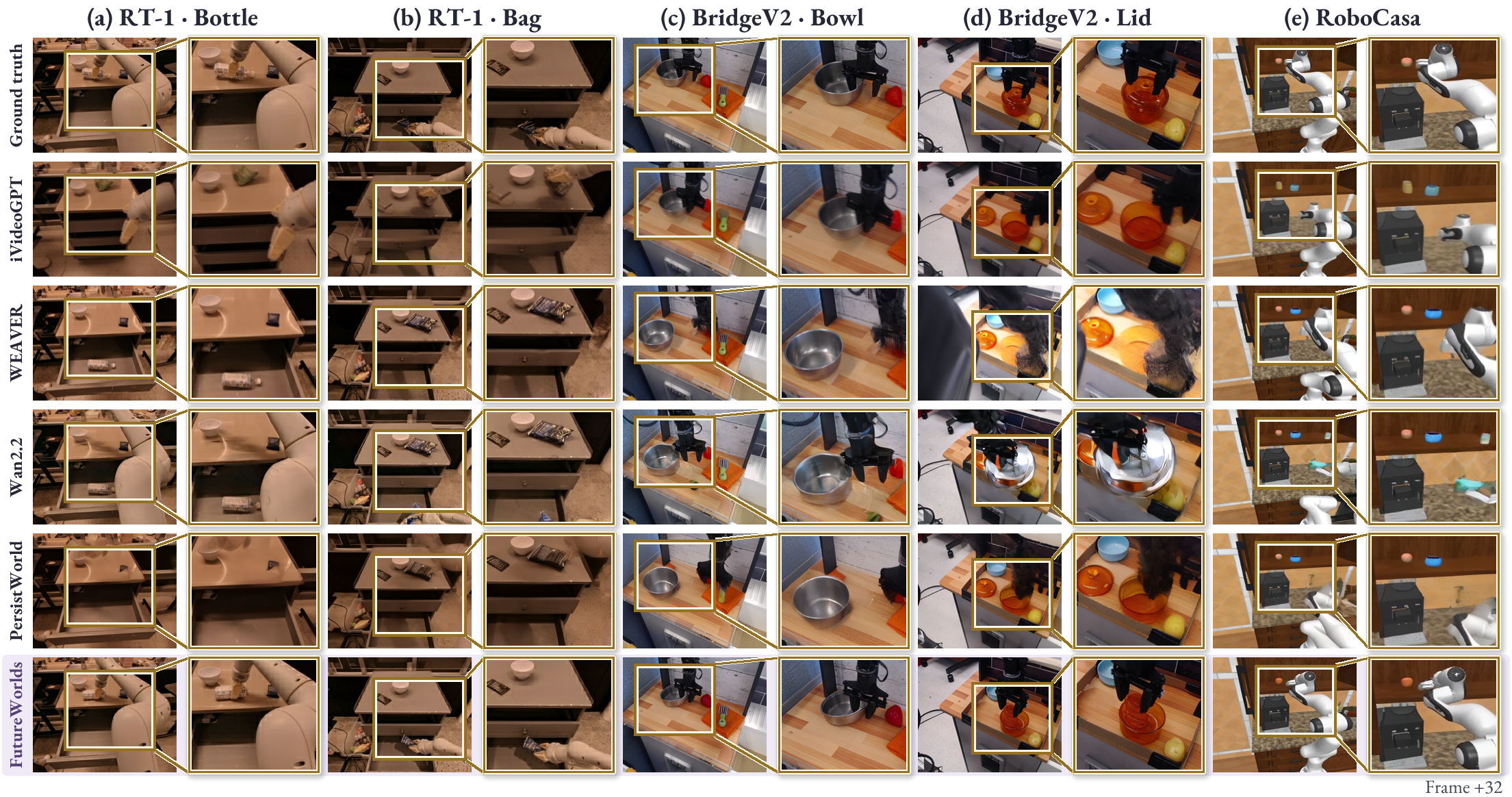}\par
    \setlength{\abovecaptionskip}{3pt}
    \caption{\textbf{\textit{Qualitative comparison at frame 32.}} Five selected cases on RT-1, BridgeV2, and RoboCasa comparing four baselines and our method, with full images and matched detail views.}
    \label{fig:rq1-qualitative}
\end{figure}
\endgroup

%% file: sections/rq1_results.tex
\subsection{Video Generation Quality (RQ1)}
\label{sec:rq1-quality}

In Table~\ref{tab:main-results}, \method{} leads all three metrics on all datasets. Relative to the strongest baseline per metric on BridgeV2, it gains 1.71~dB PSNR and 2.17\% SSIM and reduces LPIPS by 20.84\%. LPIPS also decreases by 14.78\% on RT-1 and 9.12\% on RoboCasa relative to the strongest baseline per dataset. Figure~\ref{fig:rq1-qualitative} shows closer object locations and arm poses in RT-1, fewer object substitutions and local distortions in BridgeV2, and more accurate arm and gripper poses in RoboCasa.

These comparisons support improved object consistency and action-conditioned prediction. The matched crops expose incorrect object positions relative to the gripper despite plausible textures, distinguishing faithful action-conditioned prediction from visually plausible but inconsistent manipulation states. Further quantitative and qualitative results appear in the appendix. A qualitative case in Appendix~\ref{app:policy-feedback} illustrates how predicted futures provide feedback for action selection toward closed-loop control.

%% file: tables/matched_post_training.tex
\begingroup
\definecolor{resultink}{HTML}{292C42}
\definecolor{resultpurple}{HTML}{544178}
\definecolor{resultheader}{HTML}{E9E8F2}
\definecolor{resultstripe}{HTML}{F6F7FA}
\definecolor{resultours}{HTML}{EEE7F7}
\definecolor{rankfirst}{HTML}{E4D9F1}
\definecolor{ranksecond}{HTML}{F7E7D6}
\definecolor{rankthird}{HTML}{E0EDF4}
\newcommand{\rankone}[1]{\cellcolor{rankfirst}#1}
\newcommand{\ranktwo}[1]{\cellcolor{ranksecond}#1}
\newcommand{\rankthree}[1]{\cellcolor{rankthird}#1}
\definecolor{resultpending}{HTML}{7A7D8C}
\newcommand{\pendingresult}{\textcolor{resultpending}{\textendash}}
\newcommand{\pendingdataset}{\pendingresult & \pendingresult & \pendingresult}
\newcommand{\pendingrow}{\pendingdataset & \pendingdataset & \pendingdataset}
\newcommand{\resultcategory}[1]{%
  \addlinespace[0.5pt]
  \multicolumn{10}{@{}l}{\rule{0pt}{9.5pt}\textcolor{resultpurple}{\bfseries\itshape #1}}\\[0pt]}
\newcommand{\resultmetrics}{%
  {\fontsize{8.5}{9.6}\selectfont\mbox{PSNR}} &
  {\fontsize{8.5}{9.6}\selectfont\mbox{SSIM}} &
  {\fontsize{8.5}{9.6}\selectfont\mbox{LPIPS}}}

\begin{table}[H]
\centering
\caption{Post-training comparison with 200 updates. Metrics: PSNR$\uparrow$, SSIM$\uparrow$, and LPIPS$\downarrow$. Gain rows show relative improvements over GRPO (\%).}
\label{tab:matched-post-training}
\vspace{1pt}
\fontencoding{T1}\fontfamily{EBGaramond-TLF}\selectfont
\fontsize{9.0}{10.5}\selectfont
\color{resultink}
\setlength{\tabcolsep}{1.2pt}
\renewcommand{\arraystretch}{0.96}
\setlength{\aboverulesep}{1.6pt}
\setlength{\belowrulesep}{1.6pt}
% Only method names determine the first-column width; categories span all 10 columns.
\begin{tabularx}{\linewidth}{@{}p{0.25\linewidth}*{9}{>{\centering\arraybackslash}X}@{}}
\toprule[0.95pt]
\rowcolor{resultheader}
\rule{0pt}{9.5pt}\textbf{Method}
 & \multicolumn{3}{c}{\textbf{10 frames}}
 & \multicolumn{3}{c}{\textbf{20 frames}}
 & \multicolumn{3}{c@{}}{\textbf{32 frames}}\\[-0.5pt]
\cmidrule(lr){2-4}\cmidrule(lr){5-7}\cmidrule(l){8-10}
 & \resultmetrics & \resultmetrics & \resultmetrics\\[-0.5pt]
\midrule[0.55pt]
\resultcategory{RT-1}
\rowcolor{resultstripe} SFT (50k) & 23.76 & 0.8298 & 0.1308 & 22.49 & 0.8075 & 0.1527 & 21.37 & 0.7860 & 0.1734\\
GRPO (200) & 23.97 & 0.8319 & 0.1289 & 22.60 & 0.8089 & 0.1517 & 21.39 & 0.7858 & 0.1739\\
\rowcolor{resultours} \textcolor{resultpurple}{\bfseries MemSPO (200)} & \textbf{24.33} & \textbf{0.8357} & \textbf{0.1240} & \textbf{23.42} & \textbf{0.8192} & \textbf{0.1397} & \textbf{22.45} & \textbf{0.8016} & \textbf{0.1565}\\
\addlinespace[0.5pt]
\textcolor{resultpurple}{\itshape Gain over GRPO} & \textcolor{resultpurple}{+1.49\%} & \textcolor{resultpurple}{+0.45\%} & \textcolor{resultpurple}{+3.80\%} & \textcolor{resultpurple}{+3.61\%} & \textcolor{resultpurple}{+1.28\%} & \textcolor{resultpurple}{+7.92\%} & \textcolor{resultpurple}{+4.97\%} & \textcolor{resultpurple}{+2.01\%} & \textcolor{resultpurple}{+9.98\%}\\[0pt]
\noalign{\vskip 2pt\hbox to\linewidth{\color{resultpending}\leaders\hbox{\rule{2pt}{0.35pt}\hskip 2pt}\hfill}\vskip 1pt}
\resultcategory{BridgeV2}
\rowcolor{resultstripe} SFT (50k) & 24.07 & 0.8300 & 0.1096 & 22.74 & 0.8084 & 0.1270 & 21.73 & 0.7940 & 0.1450\\
GRPO (200) & 24.12 & 0.8301 & 0.1089 & 22.80 & 0.8088 & 0.1263 & 21.80 & 0.7945 & 0.1441\\
\rowcolor{resultours} \textcolor{resultpurple}{\bfseries MemSPO (200)} & \textbf{24.31} & \textbf{0.8320} & \textbf{0.1073} & \textbf{23.00} & \textbf{0.8108} & \textbf{0.1244} & \textbf{21.97} & \textbf{0.7962} & \textbf{0.1418}\\
\addlinespace[0.5pt]
\textcolor{resultpurple}{\itshape Gain over GRPO} & \textcolor{resultpurple}{+0.83\%} & \textcolor{resultpurple}{+0.23\%} & \textcolor{resultpurple}{+1.44\%} & \textcolor{resultpurple}{+0.88\%} & \textcolor{resultpurple}{+0.25\%} & \textcolor{resultpurple}{+1.53\%} & \textcolor{resultpurple}{+0.78\%} & \textcolor{resultpurple}{+0.22\%} & \textcolor{resultpurple}{+1.58\%}\\[0pt]
\noalign{\vskip 2pt\hbox to\linewidth{\color{resultpending}\leaders\hbox{\rule{2pt}{0.35pt}\hskip 2pt}\hfill}\vskip 1pt}
\resultcategory{RoboCasa}
\rowcolor{resultstripe} SFT (50k) & 19.50 & 0.7957 & 0.1513 & 18.22 & 0.7689 & 0.1787 & 17.50 & 0.7519 & 0.1971\\
GRPO (200) & 19.68 & 0.7965 & 0.1514 & 18.64 & 0.7729 & 0.1758 & 18.03 & 0.7581 & 0.1919\\
\rowcolor{resultours} \textcolor{resultpurple}{\bfseries MemSPO (200)} & \textbf{19.92} & \textbf{0.7984} & \textbf{0.1476} & \textbf{18.88} & \textbf{0.7743} & \textbf{0.1718} & \textbf{18.29} & \textbf{0.7600} & \textbf{0.1877}\\
\addlinespace[0.5pt]
\textcolor{resultpurple}{\itshape Gain over GRPO} & \textcolor{resultpurple}{+1.21\%} & \textcolor{resultpurple}{+0.24\%} & \textcolor{resultpurple}{+2.47\%} & \textcolor{resultpurple}{+1.31\%} & \textcolor{resultpurple}{+0.19\%} & \textcolor{resultpurple}{+2.28\%} & \textcolor{resultpurple}{+1.45\%} & \textcolor{resultpurple}{+0.25\%} & \textcolor{resultpurple}{+2.17\%}\\[0pt]
\bottomrule[0.95pt]
\end{tabularx}
\vspace{0pt}
\end{table}
\endgroup

%% file: sections/rq4_results.tex
\needspace{8\baselineskip}
\subsection{Post-Training Effectiveness and Efficiency (RQ2)}
\label{sec:rq2-efficiency}

\begin{wrapfigure}[14]{r}{0.54\textwidth}
    \vspace{-6pt}
    \centering
    \setlength{\parskip}{0pt}
    \includegraphics[width=\linewidth]{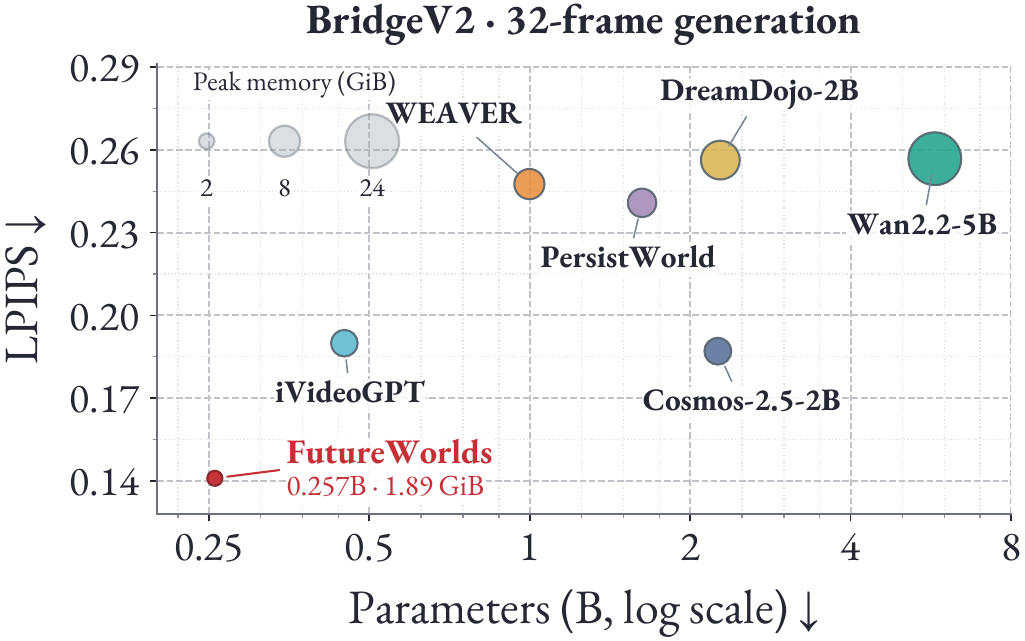}\par
    \setlength{\abovecaptionskip}{-2pt}
    \caption{\textbf{\textit{Model size and quality.}} Bubble area denotes peak GPU memory on one H20.}
    \label{fig:efficiency-bubble}
    \vspace{-6pt}
\end{wrapfigure}

On 128 matched trajectories per dataset with decoding and memory fixed, MemSPO leads all three metrics at 10, 20, and 32 frames after 200 RL updates (\mbox{Table~\ref{tab:matched-post-training}}). At 32 frames, LPIPS drops by 9.98\%, 1.58\%, and 2.17\% versus GRPO on RT-1, BridgeV2, and RoboCasa, showing gains beyond ten-frame training.

On 32 monitoring clips, 200 updates reduce LPIPS by 4.02\%/3.45\% on RT-1/BridgeV2 versus SFT (\mbox{Figure~\ref{fig:rq4-learning}a,b}), with improved object consistency and motion (c). Gains persist across five decoding settings (\mbox{Appendix Table~\ref{tab:decoding-sensitivity}}).

\par\WFclear
With checkpoints and beam width fixed, the LPIPS reduction remains 3.03--4.32\% on RT-1 and 2.60--3.84\% on BridgeV2 across the tested temperature and top-$p$ settings. A matched ordinary-beam post-training comparison also favors MemSPO in PSNR and LPIPS across all three datasets and horizons under identical evaluation decoding and memory settings (Appendix~\ref{app:ordinary-beam-posttraining}, Table~\ref{tab:ordinary-beam-posttraining}).

\noindent\textbf{Resource footprint.}
\method{} leads seven models in BridgeV2 LPIPS with 0.257B inference parameters (visual codec included) and 1.89\,GiB peak memory (Figure~\ref{fig:efficiency-bubble}). Costs use three 32-frame rollouts at batch size one, with native configurations and cached language features. Beam decoding takes 26.76\,s versus iVideoGPT's 4.14\,s per rollout. Smaller models do not ensure faster inference; Appendix~\ref{app:resource-measurements} details all measurements and settings.

%% file: sections/motion_results.tex
\begin{figure}[!t]
    \centering
    \includegraphics[width=\linewidth]{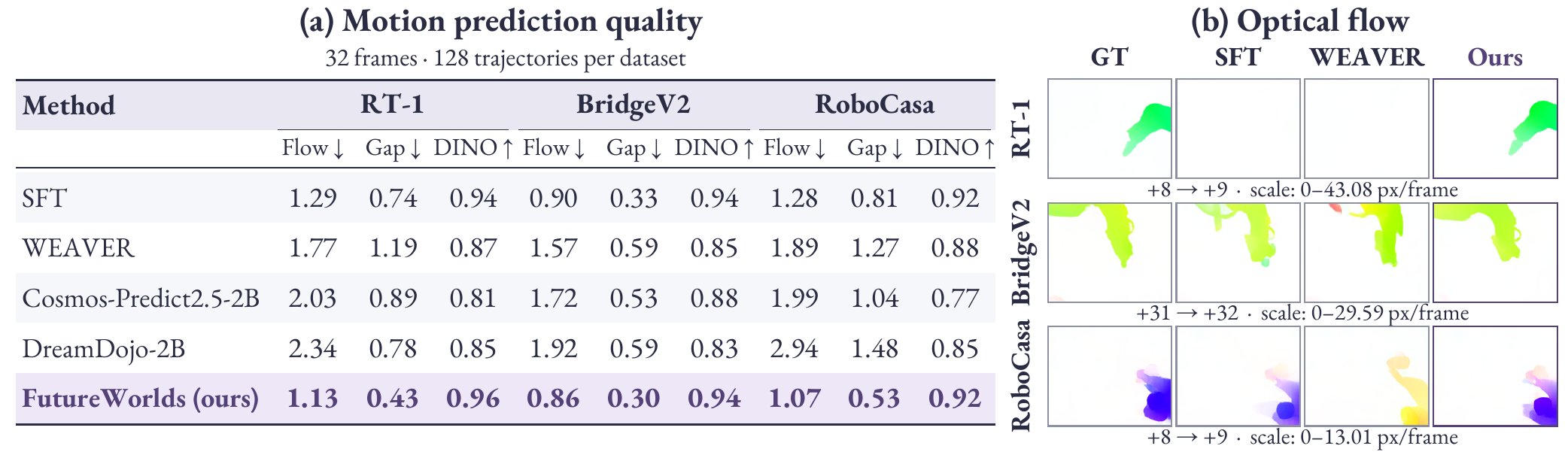}\par
    \setlength{\abovecaptionskip}{2pt}
    \caption{\textbf{\textit{Motion quality and optical-flow comparisons.}}
    (a) Full-cohort results at 32 frames. (b) Selected examples with a shared flow-color scale within each dataset.}
    \label{fig:motion-quality}
\end{figure}

\vspace{-3pt}
\subsection{Motion and Dynamic Prediction (RQ3)}
\label{sec:motion-quality}

Using WorldArena's flow and feature components~\citep{shang2026worldarena}, we evaluate 128 trajectories per dataset against SFT, WEAVER, Cosmos, and DreamDojo (Figure~\ref{fig:motion-quality}a). Flow error measures vector disagreement, magnitude gap measures motion-strength mismatch, and DINO similarity measures frame-level feature agreement (Appendix~\ref{app:motion-quality}). \method{} leads all three diagnostics, reducing flow error by 12.40\%, 4.57\%, and 16.68\% versus SFT on RT-1, BridgeV2, and RoboCasa. Improvements in both flow error and magnitude gap support closer motion agreement rather than merely larger movements. The shared-scale examples (b) complement these cohort-level results.

%% file: sections/memory_results.tex
\vspace{-3pt}
\subsection{Memory Maintenance (RQ4)}
\label{sec:memory-ablation}

We fix weights and decoding and vary only the retained history: the latest state, six recent states, or an initial anchor plus six recent states. Full memory leads PSNR, SSIM, and MSE at 32 frames across all datasets (Figure~\ref{fig:memory-ablation}), reducing MSE by 40.83\%, 57.69\%, and 29.22\% versus removing historical memory on RT-1, BridgeV2, and RoboCasa. Recent history supplies most gains; the anchor adds smaller benefits. As no variant is retrained, this comparison isolates the contribution of retained context at inference. Appendix~\ref{app:memory-ablation} reports settings and results across horizons.

\begingroup
\setlength{\intextsep}{3pt}
\begin{figure}[H]
\centering
    \includegraphics[width=\linewidth]{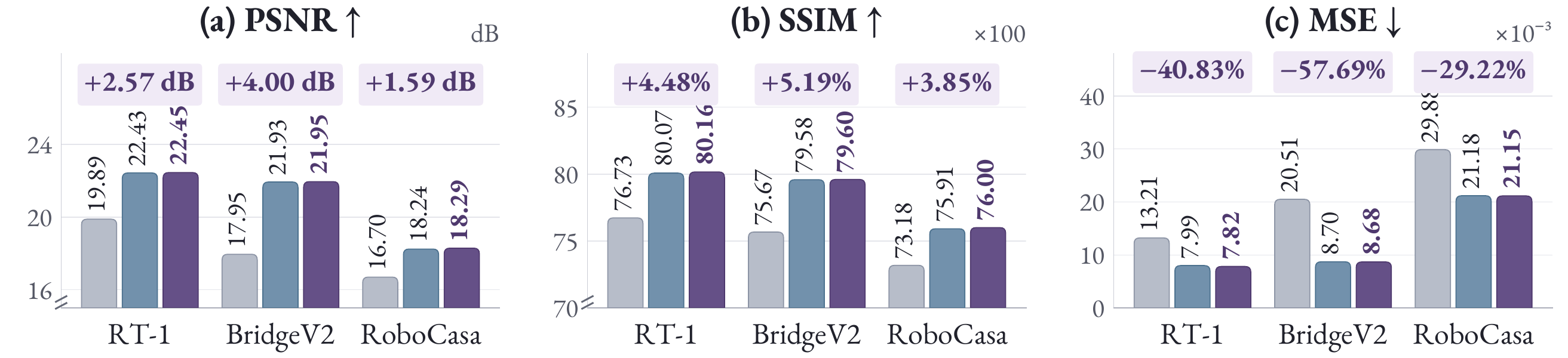}\par
    \vspace{-2pt}
    \setlength{\abovecaptionskip}{2pt}
    \caption{\textbf{\textit{Memory ablation at 32 frames.}}
    \mbox{\textcolor[HTML]{B7BDC9}{\rule{0.65em}{0.65em}}\,w/o Historical Memory},
    \mbox{\textcolor[HTML]{7291AC}{\rule{0.65em}{0.65em}}\,w/o Initial Anchor}, and
    \mbox{\textcolor[HTML]{655087}{\rule{0.65em}{0.65em}}\,Full Model}.
    Gains are versus w/o Historical Memory; PSNR and SSIM axes are truncated.}
    \label{fig:memory-ablation}
\end{figure}
\endgroup
\vspace{-3pt}

%% file: sections/conclusion.tex
\section{Conclusion and Future Work}
\label{sec:conclusion}

We present \method{}, which learns from alternative futures through coordinated candidate construction, history maintenance, and quality comparison. \memspo{} combines diverse search with history-consistent optimization, improving generation within hundreds of updates and beyond the training horizon. Three-dataset evaluations support gains in visual quality, object consistency, and motion prediction. BridgeV2 comparisons also show better quality with fewer parameters and lower measured memory usage, although autoregressive beam search incurs latency. Future work focuses on faster decoding and validating these benefits in closed-loop planning and robot control.

%% file: sections/statements.tex
\section*{AI Use Statement}
We use generative AI tools to assist with manuscript drafting, translation, and language editing; implementation and debugging of research and evaluation code; refinement of experimental protocols; qualitative case inspection and interpretation of results; review of mathematical derivations; and preparation of figures and tables. Reported numerical results come from experimental outputs rather than language-model-generated estimates. The authors take full responsibility for verifying AI-assisted work and for the accuracy and integrity of the final text, references, claims, code, and other artifacts.

\section*{Reproducibility Statement}
Section~\ref{sec:method} presents the model and optimization procedure. Appendix~\ref{app:implementation-details} documents initialization, supervised fine-tuning, post-training, reward designs, baseline adaptation, and evaluation protocols. Appendices~\ref{app:ordinary-beam-posttraining}, \ref{app:memory-ablation}, and~\ref{app:resource-measurements} provide the settings and full results for candidate-construction, memory, and resource comparisons. The anonymous project link in the abstract provides the code and reproduction instructions; we plan to release the trained checkpoints.

%% file: sections/appendix.tex
\numberwithin{equation}{section}
\section{Theoretical Analysis}
\label{app:appendix}
\label{app:theory}

We analyze how alternative futures supply a learning signal in \method{}. On a fixed candidate set, the local reward gradient of \memspo{} admits a pairwise decomposition that links relative rewards to differences in policy score gradients. The analysis characterizes this update; it does not establish that search produces better candidates.

\subsection{Pairwise Structure of the Reward Gradient}
\label{app:pairwise-gradient}

For a fixed input, consider $G\ge2$ candidates with $L=HN$ tokens each. Flatten the token indices into $k\in\{1,\ldots,L\}$, with $z_k^{(i)}$ and $u_k^{(i)}$ denoting candidate $i$'s token and condition, including its memory and within-frame prefix. Tokens, conditions, rewards, and normalization statistics remain fixed during differentiation. Selected tokens have positive old-policy probabilities, and current-policy probabilities are differentiable near $\theta_{\mathrm{old}}$.

Let $\sigma_R=\operatorname{Std}(R_1,\ldots,R_G)$ use the same convention as Eq.~(\ref{eq:relative_advantage}). Define the group-relative advantage and mean token score gradient as
\begin{equation}
 A_i=\frac{R_i-\bar R}{\sigma_R+\varepsilon},\qquad
 \bar R=\frac1G\sum_{i=1}^{G}R_i,\qquad
 \mathbf s_i=\left.\frac1L\sum_{k=1}^{L}\nabla_\theta\log\pi_\theta(z_k^{(i)}\mid u_k^{(i)})\right|_{\theta=\theta_{\mathrm{old}}}.
 \label{eq:app-score-definitions}
\end{equation}
Here, $\varepsilon>0$ stabilizes reward normalization. The implementation uses sample standard deviation, so $\sigma_R^2=\frac{1}{G-1}\sum_i(R_i-\bar R)^2$. We first analyze the normalized advantages above; the numerical zero-variance guard is addressed below. The reward component of Eq.~(\ref{eq:memspo_objective}) is
\begin{equation}
 \mathcal J_R(\theta)=\frac1{GL}\sum_{i=1}^{G}\sum_{k=1}^{L}
 \ell_{\mathrm{clip}}\!\left(\rho_k^{(i)}(\theta),A_i\right),\qquad
 \rho_k^{(i)}(\theta)=\frac{\pi_\theta(z_k^{(i)}\mid u_k^{(i)})}{\pi_{\mathrm{old}}(z_k^{(i)}\mid u_k^{(i)})}.
 \label{eq:app-reward-objective}
\end{equation}

\begin{proposition}[Pairwise reward-gradient identity]
\label{prop:pairwise-update}
Suppose generation and policy scoring use matching candidate histories, the clipping radius satisfies $0<\epsilon_{\mathrm c}<1$, and the negative-advantage dual-clipping constant satisfies $c_{\mathrm d}>1$. At $\theta=\theta_{\mathrm{old}}$, the reward gradient admits the pairwise representation
\begin{equation}
 \mathbf g_R:=\left.\nabla_\theta\mathcal J_R(\theta)\right|_{\theta=\theta_{\mathrm{old}}}
 =\frac1G\sum_{i=1}^{G}A_i\mathbf s_i
 =\frac{\sum_{i<j}(R_i-R_j)(\mathbf s_i-\mathbf s_j)}{G^2(\sigma_R+\varepsilon)}.
 \label{eq:app-pairwise-identity}
\end{equation}
\end{proposition}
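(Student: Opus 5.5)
The plan is to show that, in a neighborhood of $\theta_{\mathrm{old}}$, every clipped token term in Eq.~(\ref{eq:app-reward-objective}) coincides with the unclipped linear surrogate $\rho_k^{(i)}(\theta)A_i$. Differentiating at $\theta_{\mathrm{old}}$ then gives the first equality. The second equality is a purely algebraic identity for centered weighted sums.

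\emph{Step 1: the clipping is locally inactive.} Matching histories mean that $z_k^{(i)}$ and $u_k^{(i)}$ are the same in the numerator and the denominator of $\rho_k^{(i)}$. Hence $\rho_k^{(i)}(\theta_{\mathrm{old}})=1$. Since the selected tokens have positive old-policy probability and $\pi_\theta$ is differentiable, hence continuous, near $\theta_{\mathrm{old}}$, there is an open neighborhood $U$ of $\theta_{\mathrm{old}}$ on which $\rho_k^{(i)}(\theta)\in(1-\epsilon_{\mathrm c},1+\epsilon_{\mathrm c})$ for all finitely many $(i,k)$; this uses $0<\epsilon_{\mathrm c}<1$.

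On $U$ we have $\operatorname{clip}(\rho,1-\epsilon_{\mathrm c},1+\epsilon_{\mathrm c})=\rho$. So $v:=\min\{\rho A,\operatorname{clip}(\rho)A\}=\rho A$ identically, not just at the point, which avoids any nondifferentiability of the $\min$.

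For $A_i<0$ the dual clip gives $\max\{\rho A_i,c_{\mathrm d}A_i\}$. At $\rho=1$ we have $A_i>c_{\mathrm d}A_i$ because $c_{\mathrm d}>1$. Shrinking $U$ so that $\rho<c_{\mathrm d}$ throughout keeps the first branch strictly active. For $A_i=0$ the term vanishes identically. Thus $\ell_{\mathrm{clip}}(\rho_k^{(i)}(\theta),A_i)=\rho_k^{(i)}(\theta)A_i$ on $U$.

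\emph{Step 2: differentiate.} The quantities $A_i$ and $\sigma_R$ are treated as constants, so on $U$ we have $\mathcal J_R(\theta)=\frac1{GL}\sum_{i,k}A_i\rho_k^{(i)}(\theta)$. Using $\nabla_\theta\rho=\rho\,\nabla_\theta\log\pi_\theta$ and $\rho=1$ at $\theta_{\mathrm{old}}$, we get
\[
\mathbf g_R=\frac1{GL}\sum_i A_i\sum_k\nabla_\theta\log\pi_\theta(z_k^{(i)}\mid u_k^{(i)})\big|_{\theta_{\mathrm{old}}}=\frac1G\sum_iA_i\mathbf s_i .
\]

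\emph{Step 3: pairwise form.} Expand
\[
\sum_{i<j}(R_i-R_j)(\mathbf s_i-\mathbf s_j)=\tfrac12\sum_{i,j}(R_i-R_j)(\mathbf s_i-\mathbf s_j)=G\sum_iR_i\mathbf s_i-\Big(\sum_iR_i\Big)\Big(\sum_j\mathbf s_j\Big)=G\sum_i(R_i-\bar R)\mathbf s_i .
\]
Dividing by $G^2(\sigma_R+\varepsilon)$ gives $\frac1G\sum_iA_i\mathbf s_i$, because $A_i=(R_i-\bar R)/(\sigma_R+\varepsilon)$. Since $\varepsilon>0$, the denominator is positive even when all rewards coincide. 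In that case both sides vanish, which is consistent with the zero-variance guard treated separately.

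The only delicate point is Step 1. One must argue that the piecewise surrogate agrees with $\rho A$ on a whole neighborhood, rather than merely at $\theta_{\mathrm{old}}$, so that the gradient exists and equals the linear one. The strict inequalities $\epsilon_{\mathrm c}<1$ and $c_{\mathrm d}>1$, together with continuity, are exactly what deliver this.
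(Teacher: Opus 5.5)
Your proof is correct and follows the paper's argument. You show that ordinary and dual clipping are inactive near $\rho=1$, use the identity $\nabla_\theta\rho=\rho\,\nabla_\theta\log\pi_\theta$, and expand the pairwise sum symmetrically with $\sum_iR_i=G\bar R$. Your explicit neighborhood condition $\rho<c_{\mathrm d}$ spells out what the paper asserts more briefly. One small point: $\rho_k^{(i)}(\theta_{\mathrm{old}})=1$ already follows from the numerator and denominator sharing the same condition, and, as the paper remarks, history matching only gives that score its candidate-generation interpretation.
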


\begin{proof}
We first reduce the clipped objective locally. Write
\begin{equation}
 v(\rho,A)=\min\!\left\{\rho A,\operatorname{clip}(\rho,1-\epsilon_{\mathrm c},1+\epsilon_{\mathrm c})A\right\}.
 \label{eq:app-clip-definition}
\end{equation}
The policy term is $v(\rho,A)$ for $A\ge0$ and $\max\{v(\rho,A),c_{\mathrm d}A\}$ for $A<0$. At identical parameters, evaluating numerator and denominator under the same fixed condition gives $\rho_k^{(i)}(\theta_{\mathrm{old}})=1$. Matching this condition to generation additionally ensures that the score refers to the history that produces the candidate. In a neighborhood of this point, ordinary clipping is inactive. For $A_i<0$, the strict inequality $A_i>c_{\mathrm d}A_i$ also makes dual clipping inactive; for $A_i=0$, the term and its derivative are zero. Consequently,
\begin{equation}
 \left.\nabla_\theta\ell_{\mathrm{clip}}(\rho_k^{(i)}(\theta),A_i)\right|_{\theta=\theta_{\mathrm{old}}}
 =A_i\left.\nabla_\theta\rho_k^{(i)}(\theta)\right|_{\theta=\theta_{\mathrm{old}}}.
 \label{eq:app-local-clip-gradient}
\end{equation}
The denominator of the probability ratio is fixed, so its derivative satisfies
\begin{equation}
 \nabla_\theta\rho_k^{(i)}(\theta)
 =\frac{\nabla_\theta\pi_\theta(z_k^{(i)}\mid u_k^{(i)})}{\pi_{\mathrm{old}}(z_k^{(i)}\mid u_k^{(i)})}
 =\rho_k^{(i)}(\theta)\nabla_\theta\log\pi_\theta(z_k^{(i)}\mid u_k^{(i)}).
 \label{eq:app-ratio-derivative}
\end{equation}
Substituting into Eq.~(\ref{eq:app-reward-objective}) and evaluating at the old policy yields
\begin{equation}
 \mathbf g_R=\frac1G\sum_{i=1}^{G}A_i\mathbf s_i
 =\frac1{G(\sigma_R+\varepsilon)}\sum_{i=1}^{G}(R_i-\bar R)\mathbf s_i.
 \label{eq:app-centered-gradient}
\end{equation}

To obtain the pairwise form, symmetry and the vanishing diagonal terms give
\begin{equation}
 \sum_{i<j}(R_i-R_j)(\mathbf s_i-\mathbf s_j)
 =\frac12\sum_{i=1}^{G}\sum_{j=1}^{G}
 \left(R_i\mathbf s_i+R_j\mathbf s_j-R_i\mathbf s_j-R_j\mathbf s_i\right).
 \label{eq:app-pairwise-expansion}
\end{equation}
Collecting the four sums and using $\sum_iR_i=G\bar R$ gives
\begin{equation}
 \sum_{i<j}(R_i-R_j)(\mathbf s_i-\mathbf s_j)
 =G\sum_{i=1}^{G}R_i\mathbf s_i
 -\left(\sum_{i=1}^{G}R_i\right)\left(\sum_{i=1}^{G}\mathbf s_i\right)
 =G\sum_{i=1}^{G}(R_i-\bar R)\mathbf s_i.
 \label{eq:app-pairwise-centering}
\end{equation}
Combining Eqs.~(\ref{eq:app-centered-gradient}) and~(\ref{eq:app-pairwise-centering}) proves Eq.~(\ref{eq:app-pairwise-identity}).
\end{proof}

Each pair contributes through its reward and score-gradient differences; either vanishing difference removes its contribution. No independence assumption is required, so the identity also holds for beam-search candidates. The algebra applies to any fixed conditions shared by numerator and denominator; history matching gives these scores their intended candidate-generation interpretation, rather than providing an independent performance guarantee.

\subsection{A Bound under Candidate Collapse}
\label{app:candidate-collapse}

Define the empirical reward variance and score-gradient dispersion by
\begin{equation}
 \bar{\mathbf s}=\frac1G\sum_{i=1}^{G}\mathbf s_i,\qquad
 V_R=\frac1G\sum_{i=1}^{G}(R_i-\bar R)^2,\qquad
 V_s=\frac1G\sum_{i=1}^{G}\|\mathbf s_i-\bar{\mathbf s}\|_2^2.
 \label{eq:app-dispersion-definitions}
\end{equation}
We distinguish $V_R$ from $\sigma_R^2$ so that the result accommodates either population or sample standard deviation in advantage normalization.

\begin{corollary}[Reward-gradient magnitude]
\label{cor:candidate-collapse}
Under the conditions of Proposition~\ref{prop:pairwise-update},
\begin{equation}
 \|\mathbf g_R\|_2\le\frac{\sqrt{V_RV_s}}{\sigma_R+\varepsilon}.
 \label{eq:app-gradient-bound}
\end{equation}
In particular, the reward gradient is zero if all candidate rewards are equal or all candidate score gradients coincide.
\end{corollary}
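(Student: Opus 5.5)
We start from the centered form established in the proof of Proposition~\ref{prop:pairwise-update}, namely Eq.~(\ref{eq:app-centered-gradient}):
\[
\mathbf g_R=\frac{1}{G(\sigma_R+\varepsilon)}\sum_{i=1}^{G}(R_i-\bar R)\mathbf s_i .
\]
The first step removes the mean score gradient. Because $\sum_i(R_i-\bar R)=0$, we may subtract $\bar{\mathbf s}$ from every $\mathbf s_i$ without changing the sum:
\[
\mathbf g_R=\frac{1}{G(\sigma_R+\varepsilon)}\sum_{i=1}^{G}(R_i-\bar R)(\mathbf s_i-\bar{\mathbf s}).
\]
This centering is the only real idea in the argument. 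Without it, a Cauchy--Schwarz bound would involve $\sum_i\|\mathbf s_i\|^2$ rather than the dispersion $V_s$, and it would not vanish when all score gradients coincide.

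The second step bounds the norm. The triangle inequality gives
\[
\|\mathbf g_R\|_2\le\frac{1}{G(\sigma_R+\varepsilon)}\sum_{i=1}^{G}|R_i-\bar R|\,\|\mathbf s_i-\bar{\mathbf s}\|_2 .
\]
Applying Cauchy--Schwarz to the scalar sequences $|R_i-\bar R|$ and $\|\mathbf s_i-\bar{\mathbf s}\|_2$ bounds the sum by
\[
\Bigl(\sum_i(R_i-\bar R)^2\Bigr)^{1/2}\Bigl(\sum_i\|\mathbf s_i-\bar{\mathbf s}\|_2^2\Bigr)^{1/2}=\sqrt{GV_R}\,\sqrt{GV_s}=G\sqrt{V_RV_s}.
\]
The factors of $G$ cancel, which yields Eq.~(\ref{eq:app-gradient-bound}). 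The definitions in Eq.~(\ref{eq:app-dispersion-definitions}) use the population normalization $1/G$, and this is exactly what makes the $G$ factors cancel. The denominator $\sigma_R+\varepsilon$ is carried through untouched, so the bound holds whichever standard-deviation convention defines $\sigma_R$, and it stays finite because $\varepsilon>0$.

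The final step handles the special cases. If all rewards are equal, then $V_R=0$. If all score gradients coincide, then $V_s=0$. In either case the right-hand side is $0$, so $\mathbf g_R=\mathbf 0$. The paper's note on the numerical zero-variance guard is consistent with this: when the guard sets the advantages to zero, $\mathbf g_R=\mathbf 0$ holds trivially. No step here is a genuine obstacle.
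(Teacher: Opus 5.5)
Your proposal is correct and matches the paper's proof step for step. Both arguments double-center using $\sum_i(R_i-\bar R)=0$, apply the triangle inequality and then Cauchy--Schwarz to the scalar sequences $|R_i-\bar R|$ and $\|\mathbf s_i-\bar{\mathbf s}\|_2$, and note that the bound vanishes when $V_R=0$ or $V_s=0$. Your remark on the zero-variance guard likewise agrees with the paper's scope discussion.
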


\begin{proof}
Since $\sum_i(R_i-\bar R)=0$, subtracting the mean score gradient leaves Eq.~(\ref{eq:app-centered-gradient}) unchanged:
\begin{equation}
 \mathbf g_R=\frac1{G(\sigma_R+\varepsilon)}
 \sum_{i=1}^{G}(R_i-\bar R)(\mathbf s_i-\bar{\mathbf s}).
 \label{eq:app-double-centering}
\end{equation}
The triangle inequality gives
\begin{equation}
 \|\mathbf g_R\|_2\le\frac1{G(\sigma_R+\varepsilon)}
 \sum_{i=1}^{G}|R_i-\bar R|\,\|\mathbf s_i-\bar{\mathbf s}\|_2.
 \label{eq:app-triangle-bound}
\end{equation}
Applying the Cauchy--Schwarz inequality to the two scalar sequences yields
\begin{equation}
 \|\mathbf g_R\|_2\le
 \frac{\sqrt{\sum_i(R_i-\bar R)^2}\sqrt{\sum_i\|\mathbf s_i-\bar{\mathbf s}\|_2^2}}{G(\sigma_R+\varepsilon)}
 =\frac{\sqrt{V_RV_s}}{\sigma_R+\varepsilon}.
 \label{eq:app-cauchy-bound}
\end{equation}
If either $V_R$ or $V_s$ vanishes, the upper bound is zero.
\end{proof}

We can also express dispersion through pairwise distances. Expanding the squared norms gives
\begin{equation}
 \sum_{i<j}\|\mathbf s_i-\mathbf s_j\|_2^2
 =G\sum_{i=1}^{G}\|\mathbf s_i\|_2^2-\left\|\sum_{i=1}^{G}\mathbf s_i\right\|_2^2
 =G\sum_{i=1}^{G}\|\mathbf s_i-\bar{\mathbf s}\|_2^2.
 \label{eq:app-pairwise-dispersion}
\end{equation}
Hence $V_s=G^{-2}\sum_{i<j}\|\mathbf s_i-\mathbf s_j\|_2^2$. If every pair satisfies $\|\mathbf s_i-\mathbf s_j\|_2\le\delta$, the $G(G-1)/2$ pairs imply
\begin{equation}
 V_s\le\frac{G-1}{2G}\delta^2,\qquad
 \|\mathbf g_R\|_2\le
 \frac{\sqrt{V_R}}{\sigma_R+\varepsilon}\sqrt{\frac{G-1}{2G}}\,\delta.
 \label{eq:app-collapse-bound}
\end{equation}
Thus, at a fixed reward scale, collapse in candidate score gradients limits the reward-driven update. Larger dispersion alone does not guarantee a larger or better-directed update, since pairwise contributions can cancel. Visual diversity is also distinct from score-gradient dispersion; connecting the two requires empirical analysis.

\noindent\textbf{\emph{Scope.}}
These statements concern the reward component at $\theta_{\mathrm{old}}$; the reference-regularization gradient can remain nonzero under collapse, and clipping can become active away from this point. Because search changes the candidate distribution, the analysis implies neither an unbiased on-policy gradient nor monotonic improvement. The benefit of diverse beam search remains an empirical question. In our implementation, each fresh candidate set receives one update evaluated at the old parameters, where clipping is locally inactive. If $\sigma_R\le10^{-6}$, the implementation sets all advantages to zero; the reward gradient is then zero and the magnitude bound remains valid, although the unguarded pairwise equality need not hold for small nonzero reward differences. The full optimizer step also includes reference regularization and optimizer preconditioning, so it is distinct from $\mathbf g_R$.

\section{Implementation Details}
\label{app:implementation-details}

\subsection{Pretrained Initialization, SFT, and RL}
\label{app:tokenizer-details}
\label{app:training-settings}

\noindent\textbf{Pretrained initialization and tokenizer adaptation.}
We initialize from released pretrained visual-tokenizer and autoregressive world-model checkpoints. RT-1 reuses the pretrained tokenizer directly; BridgeV2 and RoboCasa use dataset-adapted tokenizers. Adaptation optimizes the reconstruction objective in Eq.~\ref{eq:tokenizer_objective}, with unit weights on $\mathcal L_1$ and VGG-LPIPS and an adversarial weight of $0.1$. We use a global batch size of 32, a learning rate of $5\times10^{-5}$, and 5,000 warmup steps. The BridgeV2 and RoboCasa tokenizer checkpoints use 60,000 and 50,000 shared generator/discriminator steps, respectively; these counters do not denote generator-only updates. The resulting tokenizers remain frozen during world-model SFT and RL.

\noindent\textbf{Supervised fine-tuning.}
Each dataset receives 50,000 SFT updates with global batch size 64. Teacher forcing supplies recorded histories, and cross-entropy supervises target dynamic tokens only. AdamW uses a backbone learning rate of $10^{-5}$, a text-adapter learning rate of $10^{-4}$ where applicable, zero weight decay, and 500 warmup updates. RT-1 and RoboCasa use cosine decay to one tenth of the peak learning rate; BridgeV2 keeps the learning rate constant after warmup. Gradient norms are clipped at 1.0. We update the autoregressive backbone, action embeddings, and text adapters while freezing the visual tokenizer and T5 encoder. RT-1 and RoboCasa use language conditioning; BridgeV2 omits it. The SFT checkpoint initializes both the trainable RL policy and its frozen reference.

\noindent\textbf{Reinforcement-learning post-training.}
MemSPO uses ten-frame predicted rollouts, global batch size 16, and fresh AdamW state with learning rate $10^{-6}$ and zero weight decay. Four search groups contain two beams each, with diversity penalty $0.1$; one trajectory per group enters reward comparison. The clipping threshold is $0.2$, the reference-regularization coefficient is $0.001$, and the gradient-norm limit is 1.0. Each fresh candidate set receives one optimizer update. Candidate generation and policy scoring retain the same scene/state anchors and six recent states; future ground-truth frames enter reward computation only. Main-table checkpoints use 500/200/400 updates for RT-1/BridgeV2/RoboCasa, whereas the matched GRPO--MemSPO comparison uses 200 updates on every dataset. Dataset-specific reward choices are identified in Appendix~\ref{app:reward-design}; Algorithm~\ref{alg:futureworlds} details the update.

\clearpage
\subsection{Training Algorithm}
\label{app:training-algorithm}

Algorithm~\ref{alg:futureworlds} summarizes one-input updates; distributed training averages the losses across inputs. The colors identify candidate construction, history maintenance, and relative-quality learning. The tokenizer, text encoder, and SFT reference policy remain frozen throughout post-training.

\begingroup
\definecolor{algconstruct}{HTML}{405F7D}
\definecolor{algmaintain}{HTML}{426E62}
\definecolor{alglearn}{HTML}{705187}
\fontencoding{T1}\fontfamily{EBGaramond-TLF}\selectfont
\fontsize{10}{11.8}\selectfont
\begin{tcolorbox}[enhanced,colback=white,colframe=algconstruct!55,
 colbacktitle=algconstruct!9,coltitle=algconstruct,boxrule=0.6pt,arc=3pt,
 left=8pt,right=8pt,top=6pt,bottom=6pt,
 title={\textbf{Algorithm 1}\quad FutureWorlds: learning from alternative futures},
 fonttitle=\fontencoding{T1}\fontfamily{EBGaramond-TLF}\selectfont\bfseries]
\newcounter{fwalgorithm}\refstepcounter{fwalgorithm}\label{alg:futureworlds}
\textbf{Input:} SFT parameters $\theta_0$; training examples $x=(\mathbf c_v,\mathbf a,\ell,o_{1:H})$; horizon $H$; $N$ tokens/frame; $G$ groups, $b$ branches/group; memory budget $B$.\par
\textbf{Output:} Updated world-model parameters $\theta$.\par\smallskip
\begin{enumerate}[leftmargin=1.8em,label=\textcolor{algconstruct!65}{\scriptsize\arabic*},labelsep=0.6em,itemsep=1pt,parsep=0pt,topsep=2pt]
\item Initialize $\theta\leftarrow\theta_0$ and freeze $\pi_{\mathrm{ref}}\leftarrow\pi_{\theta_0}$.
\item \textbf{For} each post-training iteration \textbf{do}
\item \quad Draw $x$; set $\theta_{\mathrm{old}}\leftarrow\operatorname{stopgrad}(\theta)$.
\item \quad Initialize $G$ groups of $b$ branches with the same scene and state anchors.
\item \quad Set the first branch score in each group to $0$ and the others to $-\infty$.
\item \quad \textcolor{algconstruct}{\textbf{Construct}} / \textcolor{algmaintain}{\textbf{Maintain}} \textbf{for} $t=1,\ldots,H$ \textbf{do}
\item \qquad Build each branch's prefix from its memory and actions; reset its KV cache.
\item \qquad \textbf{For} token position $j=1,\ldots,N$ \textbf{do}
\item \qquad\quad \textbf{For} group $g=1,\ldots,G$ \textbf{do}
\item \qquad\qquad Compute valid-visual-token log probabilities with $\pi_{\mathrm{old}}$.
\item \qquad\qquad Extend branches using $S_g+\log\pi_{\mathrm{old}}-\lambda_{\mathrm{div}}d_g$ (Eq.~\ref{eq:diverse_search}).
\item \qquad\qquad Retain the top $b$ extensions; reorder prefixes, caches, and ancestry.
\item \qquad\quad \textbf{End for}; \textbf{end for}
\item \qquad Reorder prior trajectory records and memories by the selected ancestry.
\item \qquad Append each branch's frame; retain anchors and its recent history within $B$.
\item \quad \textbf{End for}; return the highest-search-score trajectory from each group.
\item \quad \textcolor{alglearn}{\textbf{Compare}}: decode the $G$ candidates and compute $R_i$ (Appendix~\ref{app:reward-design}).
\item \quad Compute $A_i=(R_i-\bar R)/(\operatorname{Std}(R)+\varepsilon)$; if $\operatorname{Std}(R)\le10^{-6}$, set $A_i=0$.
\item \quad \textcolor{alglearn}{\textbf{Learn}}: score candidate tokens under their saved generation-time histories.
\item \quad Use these same conditions for old, current, and frozen reference policies.
\item \quad Form $\mathcal J_{\mathrm{MemSPO}}$ (Eq.~\ref{eq:memspo_objective}); take one optimizer step on $-\mathcal J$.
\item \textbf{End for}; \textbf{return} $\theta$.
\end{enumerate}
\smallskip
\textcolor{algconstruct}{\textit{Search penalties select candidates; unmodified policy probabilities score them.}}
\end{tcolorbox}
\endgroup

\noindent\textbf{\emph{Implementation correspondence.}}
Search scores accumulate across the full horizon. Candidate selection also reorders previously saved frame prefixes and responses, preserving each final trajectory's ancestry. Reward computation and candidate construction are detached from differentiation; ground-truth futures enter rewards only. The reference penalty uses the numerically clipped form $\operatorname{clip}(q-\log q-1,-10,10)$. This training search differs from evaluation-time Beam4, which selects one frame prediction before proceeding to the next frame.

\clearpage
\subsection{Flexible Reward Design}
\label{app:reward-design}

MemSPO separates trajectory evaluation from parameter optimization. A reward module assigns a scalar score to each candidate future, and group normalization converts these scores into the advantages in Eq.~\ref{eq:relative_advantage}. Replacing the reward therefore leaves candidate construction, history maintenance, and the update objective unchanged. We describe two implemented designs that emphasize complementary aspects of prediction quality.

\noindent\textbf{\emph{Global reconstruction reward.}}
The base design in Eq.~\ref{eq:trajectory_reward} measures whole-frame pixel and perceptual agreement. For candidate $i$, define
\begin{equation}
 L_{\mathrm{global},t}^{(i)}
 =\operatorname{MAE}(\hat o_t^{(i)},o_t)
 +\operatorname{LPIPS}_{\mathrm{VGG}}(\hat o_t^{(i)},o_t),
 \qquad
 R_{\mathrm{base}}^{(i)}=-\frac{1}{H}\sum_{t=1}^{H}L_{\mathrm{global},t}^{(i)}.
\end{equation}
MAE uses RGB values in $[0,1]$, while frozen VGG-LPIPS receives images mapped to $[-1,1]$. This design evaluates overall appearance without requiring region annotations.

\noindent\textbf{\emph{Motion-aware extension.}}
To emphasize changing regions and temporal variation, the extended design adds local appearance and frame-difference errors:
\begin{equation}
 R_{\mathrm{motion}}^{(i)}=-\frac{1}{H}\sum_{t=1}^{H}
 \left[L_{\mathrm{global},t}^{(i)}
 +\lambda_{\mathrm{roi}}L_{\mathrm{roi},t}^{(i)}
 +\lambda_{\Delta}L_{\Delta,t}^{(i)}\right],
 \label{eq:motion-reward}
\end{equation}
\begin{equation}
 L_{\Delta,t}^{(i)}=
 \operatorname{MAE}\!\left(
 \hat o_t^{(i)}-\hat o_{t-1}^{(i)},\;o_t-o_{t-1}\right).
\end{equation}
At the first predicted frame, both preceding images equal the last observed frame. The local term $L_{\mathrm{roi},t}^{(i)}$ combines masked MAE with LPIPS on the corresponding bounding crop. All candidates share a mask computed from the channel-averaged absolute ground-truth frame difference. We threshold it above the larger of its 90th percentile and $0.02$, dilate it with an $11\times11$ window, and use the full frame if the mask covers less than $1\%$ of the image. Masked MAE is normalized by the number of selected RGB values; the LPIPS crop spans at least $64\times64$ pixels. This region emphasizes visible changes without semantic segmentation or optical-flow estimation.

\noindent\textbf{\emph{Fixed calibration.}}
We set $\lambda_{\mathrm{roi}}=0.5s_{\mathrm{global}}/s_{\mathrm{roi}}$ and $\lambda_{\Delta}=0.25s_{\mathrm{global}}/s_{\Delta}$. Each scale is the median temporally averaged component loss over candidates from 64 training-only calibration episodes using the frozen SFT checkpoint. Four independently sampled and four diverse-search candidates per input yield 512 calibration trajectories. The scales are $s_{\mathrm{global}}=0.153368$, $s_{\mathrm{roi}}=0.190531$, and $s_{\Delta}=0.026607$; they remain fixed throughout post-training.

\noindent\textbf{\emph{Experimental configurations.}}
Main-table RT-1 and RoboCasa checkpoints use $R_{\mathrm{base}}$, whereas BridgeV2 uses $R_{\mathrm{motion}}$. The matched GRPO--MemSPO comparison and the ordinary-versus-diverse search training comparison use $R_{\mathrm{base}}$ throughout, holding the reward fixed within each comparison. Both designs use ten-frame training rollouts and the same group-relative update. Ground-truth futures enter reward computation only; they never condition candidate generation or populate candidate memory. These designs demonstrate a shared optimization interface for different quality objectives, rather than establishing that every reward choice yields equivalent performance.

\clearpage
\subsection{Baseline Adaptation and Evaluation}
\label{app:baseline-adaptation}
\label{app:main-table-protocols}

\noindent\textbf{Shared evaluation.}
Table~\ref{tab:main-results} evaluates 32-frame predictions on 128 matched trajectories per dataset. RT-1 and BridgeV2 use held-out development sets; RoboCasa uses a held-out test set. Predictions are resized to the reference resolution, and LPIPS-VGG, SSIM, and frame-wise PSNR are averaged over frames and trajectories. Recorded actions condition generation; ground-truth future images are evaluation targets and never refresh rollout histories. Models retain their native input interfaces and decoding procedures. Cosmos-Predict2.5-2B uses one initial observation, whereas \method{} uses two. Ten-frame baseline results use prefixes of the same 32-frame rollouts.

\noindent\textbf{FutureWorlds checkpoints.}
Main-table evaluations use a beam width of four and persistent memory, with 500/200/400 MemSPO updates for RT-1/BridgeV2/RoboCasa. BridgeV2 uses the motion-aware reward (Appendix~\ref{app:reward-design}) and omits text conditioning. The matched post-training comparison instead uses 200 updates for both RL methods on every dataset.

\noindent\textbf{iVideoGPT and PersistWorld.}
On BridgeV2, iVideoGPT adapts the official OXE-256 tokenizer and autoregressive backbone for 20,000 updates each, freezing the tokenizer during backbone training. It uses two observations, recorded seven-dimensional actions, temperature 1, and top-$k$ 100 sampling. Ten-frame chunks condition subsequent chunks on the two latest generated frames. PersistWorld starts from the released Ctrl-World checkpoint, performs 2,000 target-domain updates followed by 30 NFT updates, and uses generated latent history with 25 denoising steps and guidance scale 2.0. Its visual tokenizer and CLIP encoders remain frozen during base adaptation. Neither rollout accesses future ground-truth frames.

\noindent\textbf{Wan2.2 adaptation.}
\label{app:wan-adaptation}
We adapt the official Wan2.2-TI2V-5B checkpoint~\citep{wan2025wan} using action and history adapters plus rank-32 LoRA~\citep{hu2021lora} in attention layers (Figure~\ref{fig:wan-action-adaptation}). Following chunk-wise conditioning~\citep{gao2026dreamdojo}, four actions modulate each future latent step; the previous observation supplies spatial history. The latest observation remains a clean anchor. Only adapters and LoRA parameters are trained with future-latent flow matching; the pretrained transformer, VAE, and text encoder remain frozen. The BridgeV2 run uses two observed and twelve future frames, training-only action normalization, and AdamW with a cosine learning-rate schedule from $10^{-4}$ to $10^{-5}$. A disjoint 32-example monitor selects the 8,000-update checkpoint using LPIPS-Alex; the final 128-example evaluation uses LPIPS-VGG. Inference uses 50 UniPC steps~\citep{zhao2023unipc}, with guidance scale and time shift both set to 5.

\begin{figure}[H]
  \centering
  \includegraphics[width=\linewidth]{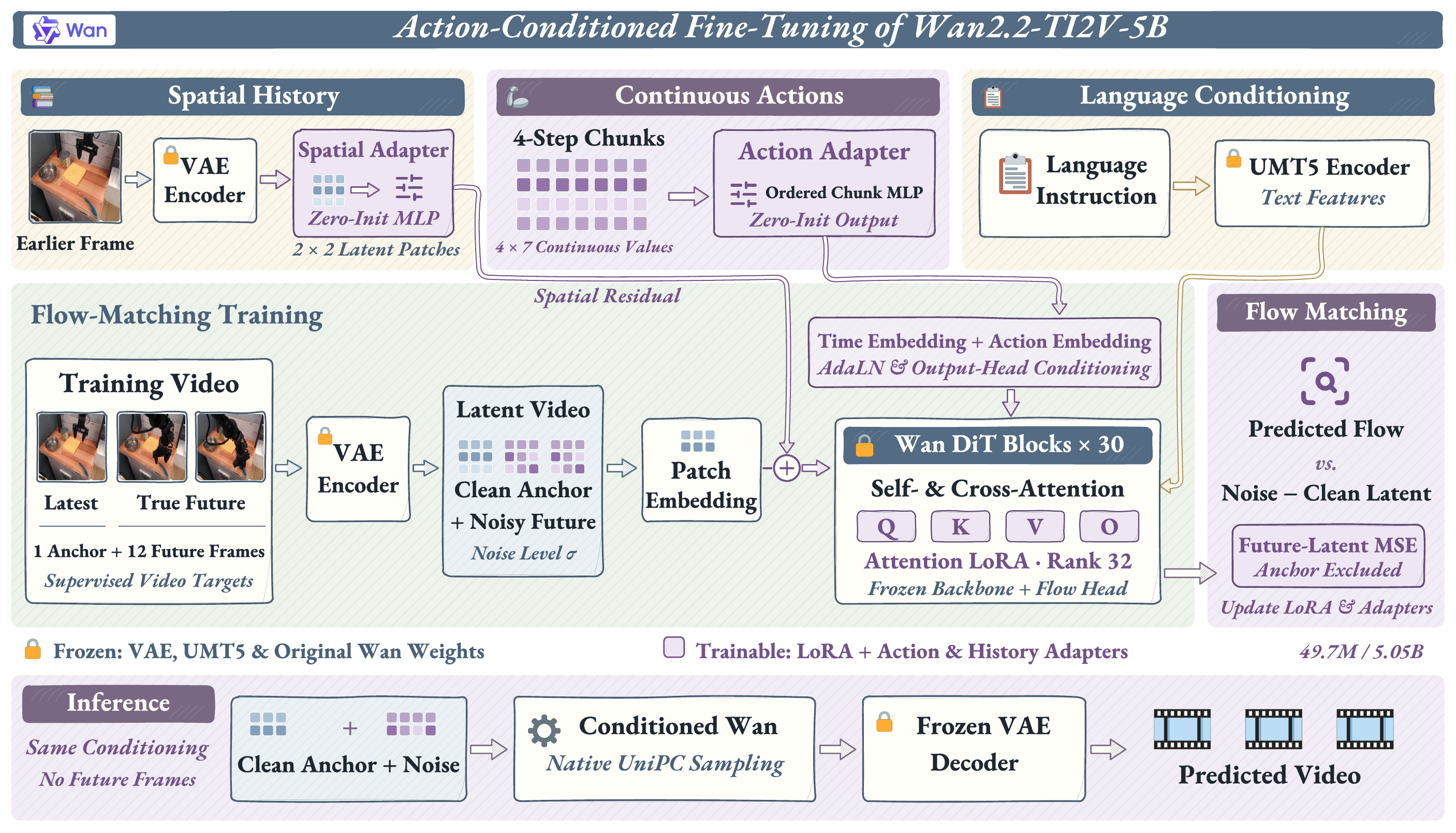}
  \setlength{\abovecaptionskip}{3pt}
  \caption{\textbf{Action-conditioned Wan2.2 adaptation.} Action chunks modulate time embeddings, and a history adapter supplies spatial context. Adapters and LoRA parameters are trainable; pretrained weights remain frozen.}
  \label{fig:wan-action-adaptation}
\end{figure}

\noindent\textbf{Evaluation metrics.}
PSNR measures pixel fidelity as $-10\log_{10}(\mathrm{MSE})$ in decibels for RGB images in $[0,1]$. SSIM compares local luminance, contrast, and structure. LPIPS~\citep{zhang2018lpips} measures perceptual distance with pretrained VGG features and inputs in $[-1,1]$. Higher PSNR/SSIM and lower LPIPS are better. Frame-level scores, including PSNR in decibels, are averaged over the prediction horizon and then over trajectories.

\noindent\textbf{Release plan.}
We plan to release all implementation code and trained checkpoints, including adaptation configurations and evaluation scripts.

\Needspace{7\baselineskip}
\subsection{Learning-Curve and Qualitative Evaluation}
\label{app:rq4-protocol}

Figure~\ref{fig:rq4-learning}(a,b) evaluates 32 fixed monitoring clips per dataset with 10-frame prediction and a beam width of four. All checkpoints share inputs and decoding settings. Circles show measurements, curves are endpoint-preserving smoothed guides, and stars mark 200-update MemSPO checkpoints. SFT and RL use different horizontal scales; update counts do not imply equal compute.

Figure~\ref{fig:rq4-learning}(c) shows selected improved examples at frame 32. The RT-1, BridgeV2, and RoboCasa models use 500, 200, and 400 MemSPO updates, respectively; the BridgeV2 example uses the base-reward checkpoint underlying the learning curve. Each SFT comparison matches inputs and decoding. Full frames remain visible, and detail crops use identical coordinates across methods. These examples illustrate prediction quality rather than measured robot task success.

\subsection{Decoding Sensitivity}
\label{app:decoding-sensitivity}
\input{tables/decoding_sensitivity}

\subsection{Motion and Dynamic Evaluation}
\label{app:motion-quality}

Figure~\ref{fig:motion-quality} uses the same 128 held-out trajectories per dataset and scores 32 predicted frames at $256\times320$ resolution. We use WorldArena's RAFT and DINO components~\citep{shang2026worldarena} to construct three reference-based diagnostics. Flow is the mean Euclidean difference between predicted-video and ground-truth optical-flow vectors over adjacent frames and spatial positions. Gap is the absolute difference between their mean flow magnitudes. DINO is the mean cosine similarity between corresponding predicted and ground-truth frame features. Each metric is averaged over trajectories; lower Flow and Gap and higher DINO indicate closer agreement with the recorded future. These diagnostics are supplementary measures rather than the official \mbox{WorldArena aggregate score.}

All metrics use GPU FP32 computation with TF32 disabled. FutureWorlds uses the main-comparison checkpoints: 500, 200 (motion-aware reward), and 400 MemSPO updates for RT-1, BridgeV2, and RoboCasa. RT-1 and RoboCasa videos are regenerated with the original decoding settings, while BridgeV2 retains the saved prediction tokens and decodes them uniformly on GPU. This supplementary evaluation does not replace the historical pixel-metric results. Ten- and twenty-frame evaluations use prefixes of the same rollouts. The qualitative cases are selected improved examples. For each case, the visualized pair has the largest ground-truth mean motion among the three stored pairs beginning at frames 8, 16, and 31. All methods share that pair and a color scale set by the pooled 99th-percentile flow magnitude; larger magnitudes are clipped. Hue represents direction and saturation represents magnitude. Values in the main figure are rounded to two decimal places.

\section{Short-Horizon Evaluation}
\label{app:short-horizon}
Table~\ref{tab:short-horizon-results} complements the 32-frame comparison with ten-frame results, using the same model checkpoints and 128 trajectories per reported dataset. Baseline scores use the first ten frames of their 32-frame rollouts; FutureWorlds uses its existing ten-frame evaluations. We average frame-wise PSNR, SSIM, and LPIPS-VGG over each trajectory and then over trajectories.

\input{tables/short_horizon_results}

\paragraph{Native interfaces.}
IRASim on BridgeV2 receives recorded commands, whereas its released checkpoint is trained with state-derived relative motion. Its RoboCasa result instead uses a target-domain 52-dimensional action adapter.

\Needspace{48\baselineskip}
\section{Policy Feedback in a Learned World Model}
\label{app:policy-feedback}

We examine a selected RT-1 drawer-closing example with a frozen, converged RT-1 policy. Starting from the same two recorded observations, the policy acts on each model's latest generated frame, and the world model predicts the next observation from that action. SFT, GRPO, and MemSPO share the initial policy state, random seed, beam width of four, and memory configuration; both post-trained models use 200 updates. No recorded future frame refreshes the interaction.

\begin{figure}[H]
  \centering
  \includegraphics[width=\linewidth]{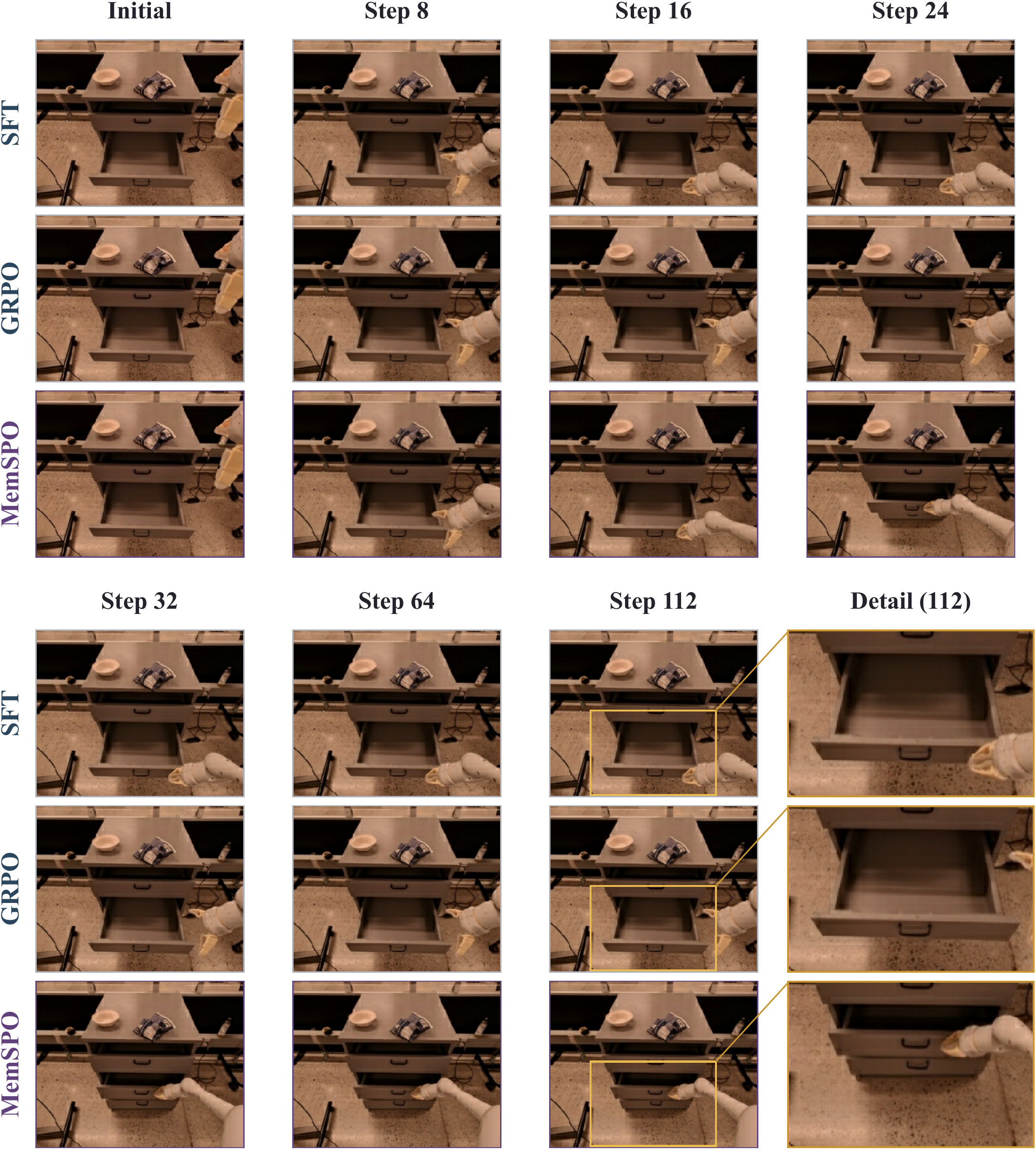}
  \setlength{\abovecaptionskip}{3pt}
  \caption{\textbf{Policy feedback in a learned world model.} A selected example for \textit{close middle drawer} with a fixed RT-1 policy. The two blocks show the same interaction at seven time points; gold boxes mark matched details at step 112.}
  \label{fig:policy-feedback}
\end{figure}

Figure~\ref{fig:policy-feedback} shows a visibly smaller drawer opening under MemSPO from step 32 onward, while SFT and GRPO retain larger openings. The contrast illustrates how post-training changes the scene evolution observed by the same policy during feedback interaction. This qualitative example concerns learned-model behavior and does not establish task success in a physical environment.

\clearpage
\section{Additional Qualitative Comparisons}
\label{app:additional-qualitative}

Figures~\ref{fig:additional-qualitative-1} and~\ref{fig:additional-qualitative-2} extend Figure~\ref{fig:rq1-qualitative} with six additional cases, two per dataset. We compare saved frame-32 predictions from WEAVER, Cosmos-Predict2.5-2B, DreamDojo-2B, and FutureWorlds against the same recorded future. Full images accompany detail views with identical crop coordinates across methods. These are selected positive examples; Table~\ref{tab:main-results} reports performance over the full evaluation cohorts.

\begin{figure}[H]
  \centering
  \includegraphics[width=\linewidth]{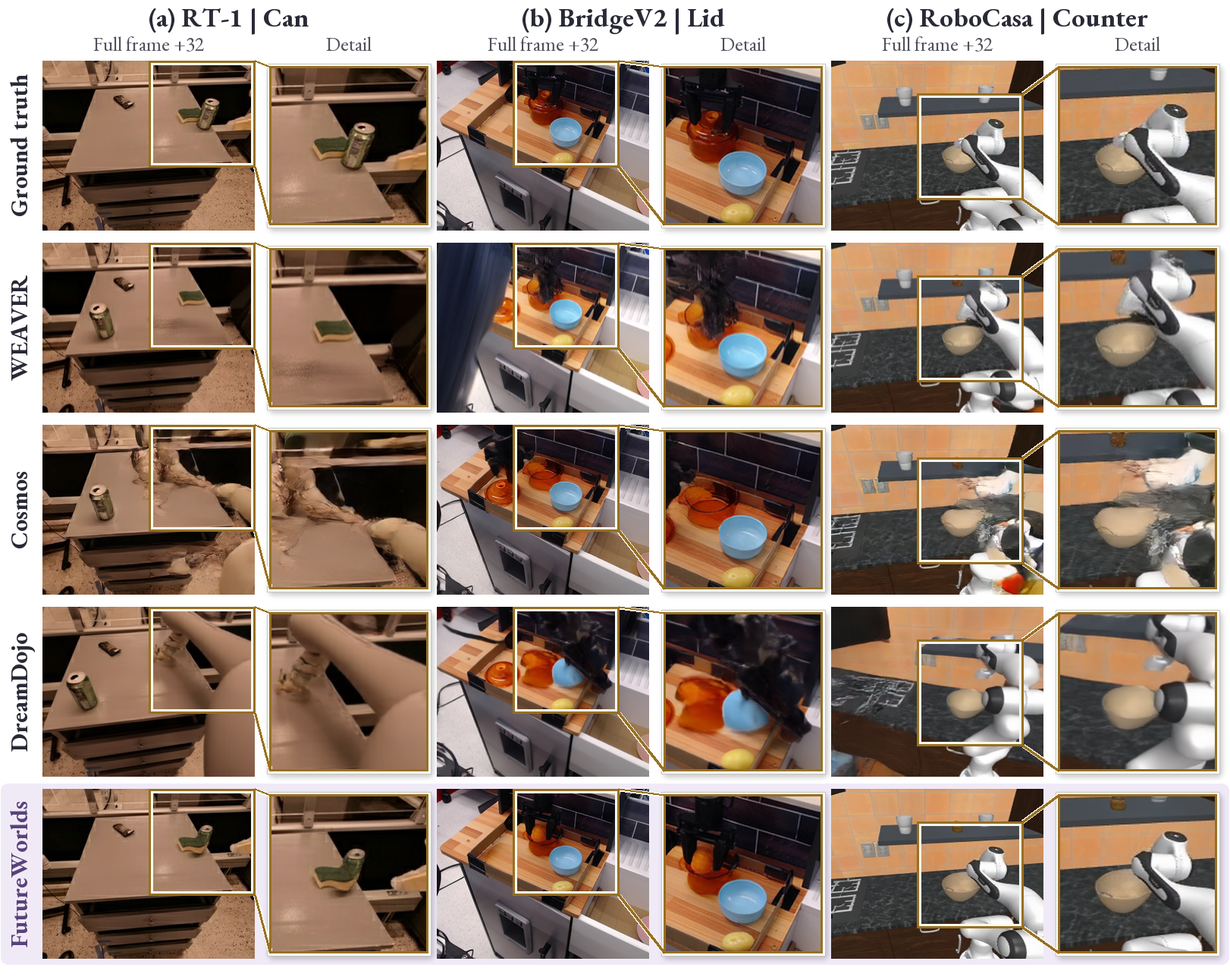}
  \setlength{\abovecaptionskip}{4pt}
  \caption{\textbf{\textit{Additional comparisons of object placement and robot pose.}} Selected RT-1, BridgeV2, and RoboCasa examples at frame 32. Gold boxes identify matched detail views; model names are abbreviated in the figure.}
  \label{fig:additional-qualitative-1}
\end{figure}

In the RT-1 case, FutureWorlds places the can nearer the sponge, as in the recorded future, while the shown baselines retain the can near its earlier position. The BridgeV2 example preserves a more recognizable lid--gripper configuration. In RoboCasa, the predicted arm configuration more closely resembles the recorded pose above the counter. These examples illustrate local visual agreement rather than robot task success.

\clearpage
\begin{figure}[H]
  \centering
  \includegraphics[width=\linewidth]{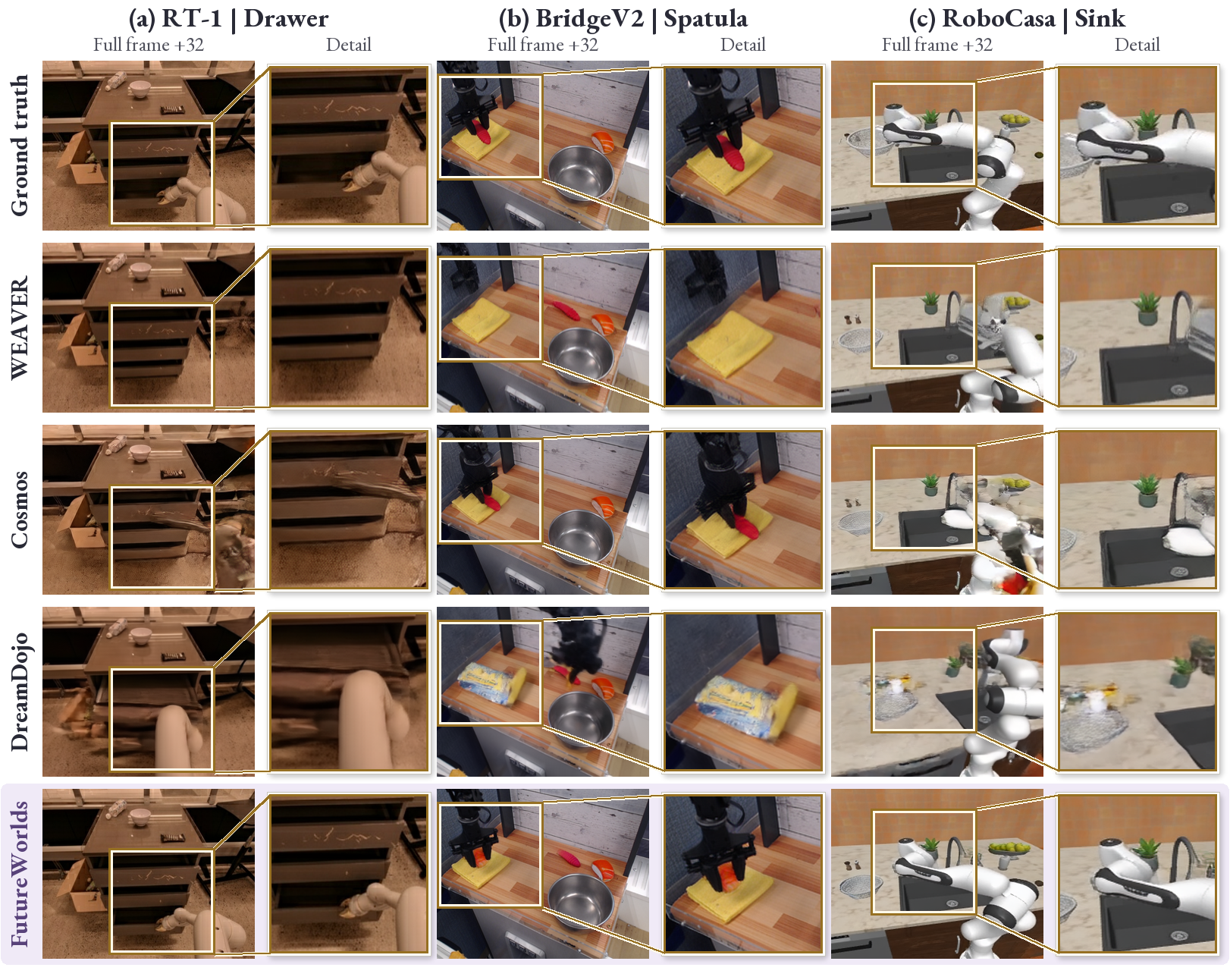}
  \setlength{\abovecaptionskip}{4pt}
  \caption{\textbf{\textit{Additional comparisons of articulated motion and object appearance.}} Three further selected cases at frame 32, using the same models and display convention as Figure~\ref{fig:additional-qualitative-1}.}
  \label{fig:additional-qualitative-2}
\end{figure}

The RT-1 example shows closer agreement in the drawer opening and gripper location. In BridgeV2, FutureWorlds retains the spatula near the yellow cloth, although its appearance and placement still differ from the reference; Cosmos also preserves the object in this example. The RoboCasa sink case shows a closer arm configuration than the displayed baselines. Full frames expose remaining differences outside the highlighted regions.

\clearpage
\subsection{Temporal Comparisons}
\label{app:temporal-comparisons}

Figures~\ref{fig:temporal-rt1}--\ref{fig:temporal-robocasa} show the evolution of three selected cases from the preceding comparisons. Each row follows one model at the same five future time points, with full frames retained throughout. The aligned sequences expose differences in motion timing, object displacement, and appearance over the rollout, complementing the frame-32 detail views.

\begin{figure}[H]
  \centering
  \includegraphics[width=\linewidth]{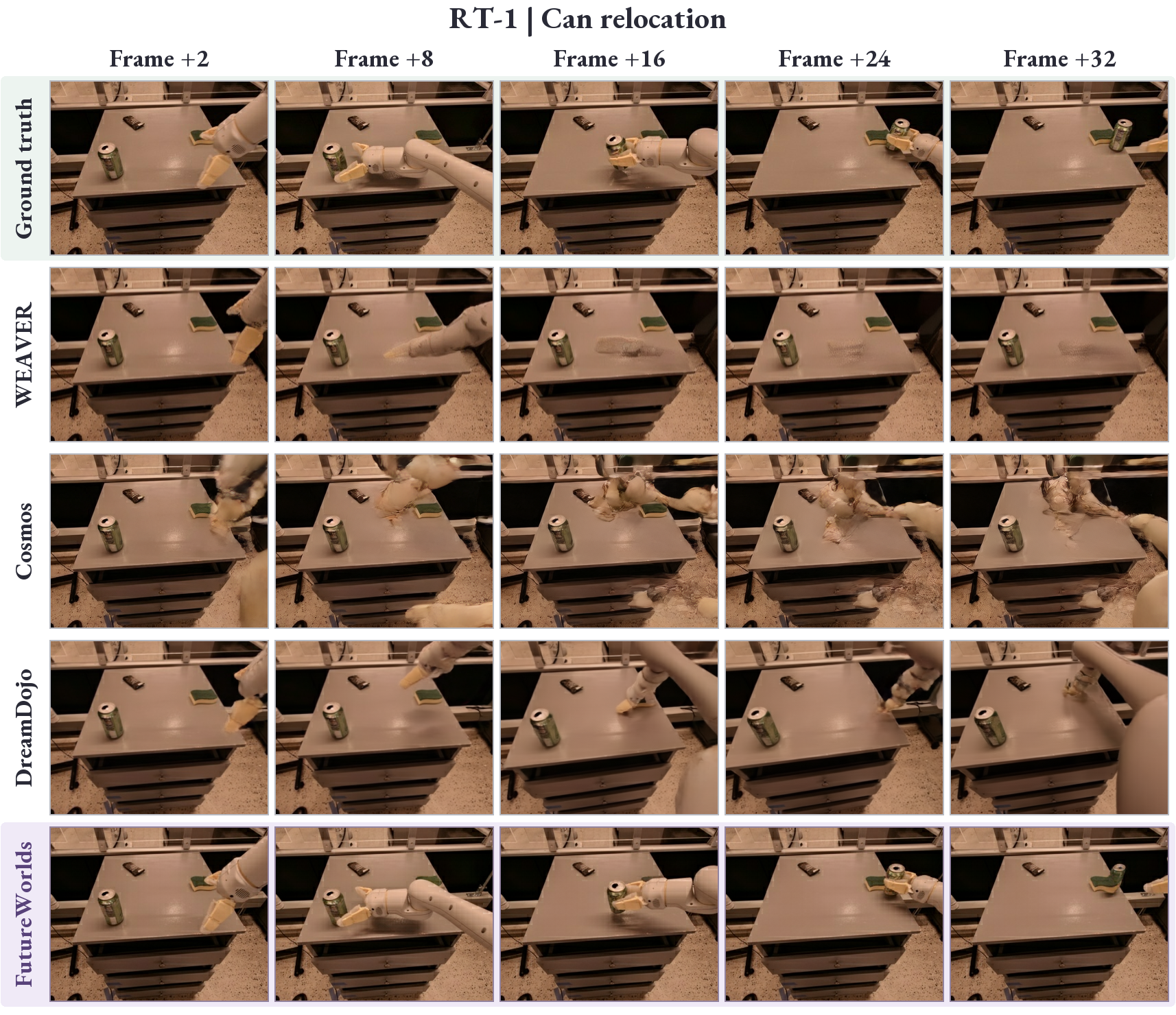}
  \setlength{\abovecaptionskip}{4pt}
  \caption{\textbf{\textit{Temporal comparison on RT-1.}} The instruction is \textit{move green can near sponge}. Columns show frames +2, +8, +16, +24, and +32 from the same saved prediction sequence for each method. FutureWorlds follows the can's displacement toward the sponge more closely, while the displayed baselines largely retain its earlier position.}
  \label{fig:temporal-rt1}
\end{figure}

\clearpage
\begin{figure}[H]
  \centering
  \includegraphics[width=\linewidth]{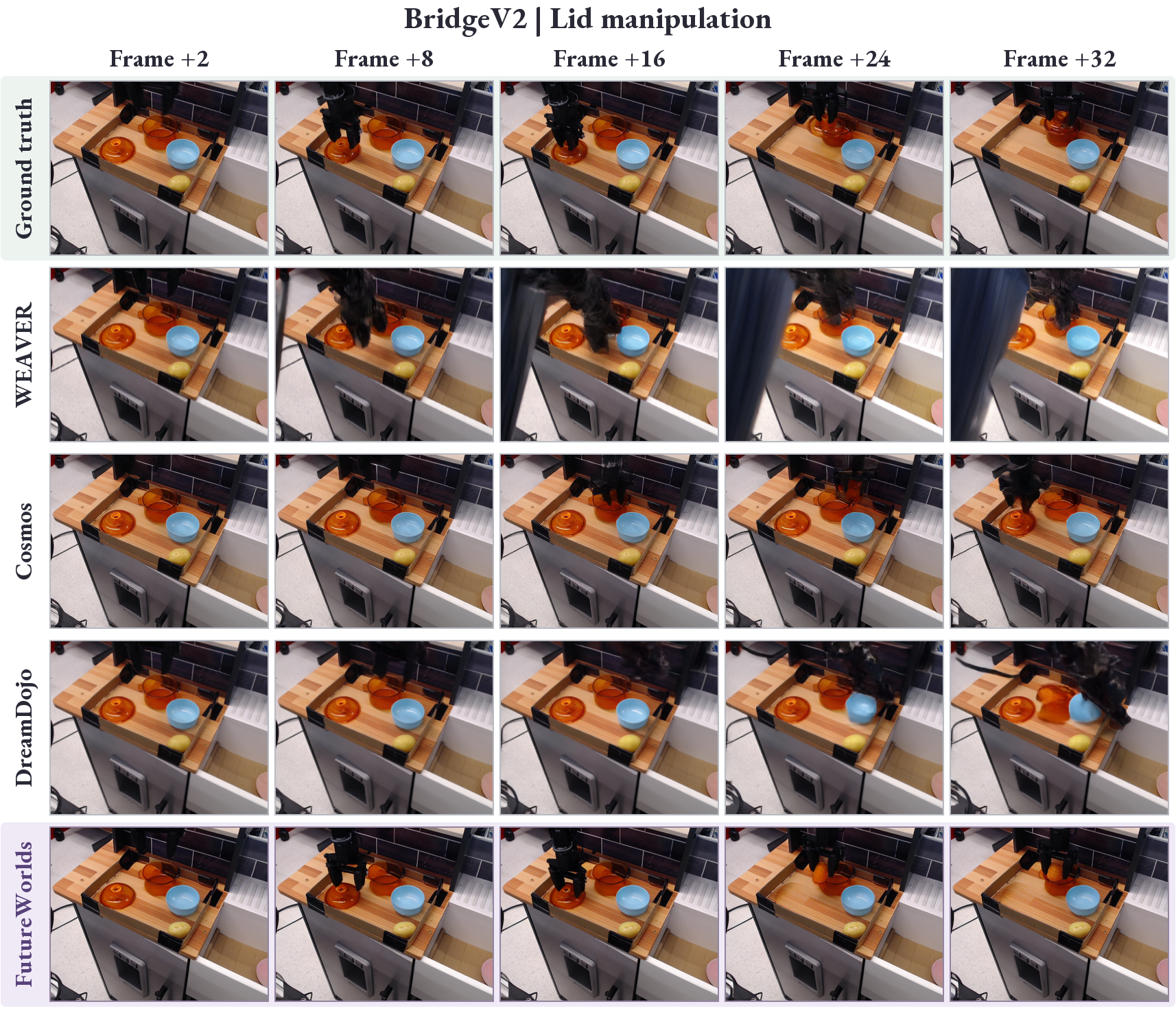}
  \setlength{\abovecaptionskip}{4pt}
  \caption{\textbf{\textit{Temporal comparison on BridgeV2.}} A selected lid-manipulation sequence at the same five time points as Figure~\ref{fig:temporal-rt1}. Full frames reveal the evolving gripper--object configuration and remaining differences from the recorded trajectory.}
  \label{fig:temporal-bridge}
\end{figure}

\clearpage
\begin{figure}[H]
  \centering
  \includegraphics[width=\linewidth]{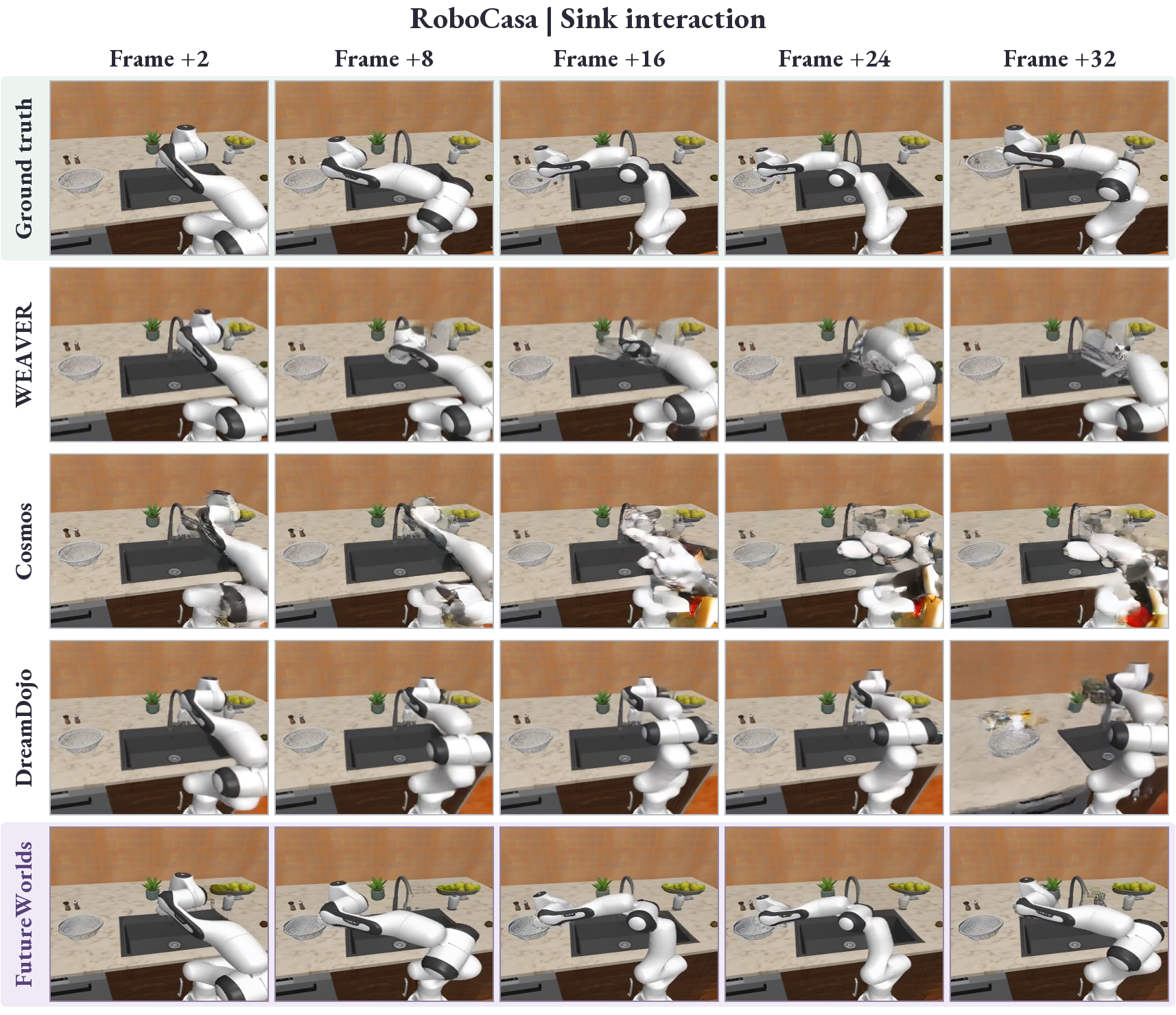}
  \setlength{\abovecaptionskip}{4pt}
  \caption{\textbf{\textit{Temporal comparison on RoboCasa.}} A selected sink-interaction sequence under the same display convention. FutureWorlds more closely follows the changing arm configuration, while deviations in pose, geometry, and scene appearance are visible in the baseline sequences. These selected examples illustrate prediction behavior, not measured task success.}
  \label{fig:temporal-robocasa}
\end{figure}

\clearpage
\subsection{Candidate Diversity and Relative Learning Signals}
\label{app:candidate-diversity}

We inspect saved candidates from the first BridgeV2 training batch, before either strategy updates the shared SFT checkpoint. Both strategies use the same inputs, frozen tokenizer, ten-frame horizon, and reward. Ordinary beam search maintains eight global paths and retains the top four; diverse search uses four groups of two with a diversity penalty of $0.1$, retaining one candidate per group.

\begin{figure}[H]
  \centering
  \includegraphics[width=\linewidth]{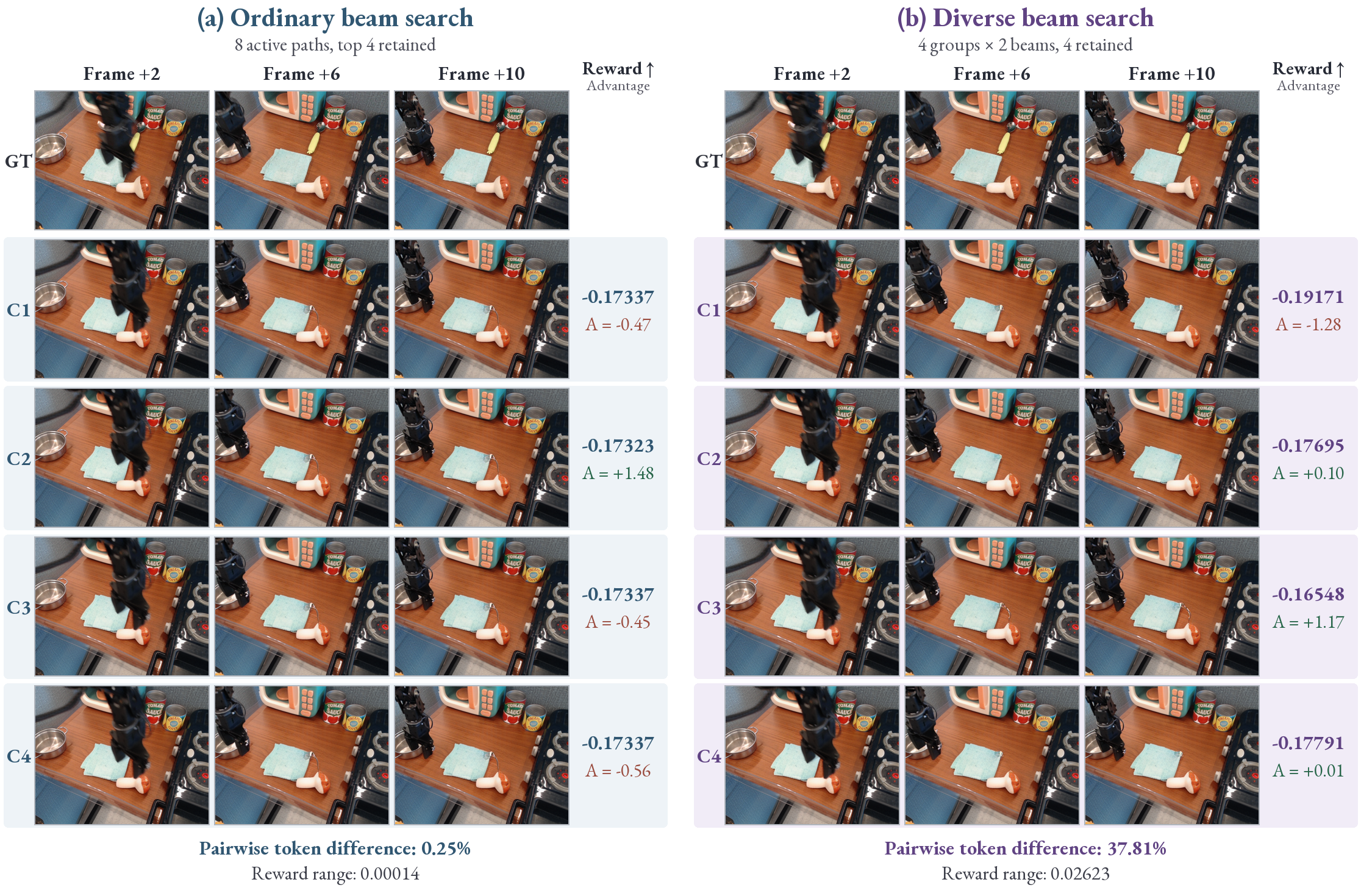}
  \setlength{\abovecaptionskip}{4pt}
  \caption{\textbf{\textit{Candidate trajectories and relative learning signals.}} Ordinary and diverse beam search from the same SFT model and input. All four retained candidates appear in their original order at frames +2, +6, and +10. Annotations give the original ten-frame reward and group-relative advantage $A$. The illustrated input has the largest diverse-search reward range among the 16 inputs in the initial batch.}
  \label{fig:beam-candidates}
\end{figure}

Ordinary search produces nearly identical visible continuations, whereas diverse search exposes differences in gripper configuration and object appearance. Positive and negative advantages give the RL objective alternatives to promote or suppress. Across all 16 inputs in this initial batch, mean pairwise token disagreement is $0.28\%$ for ordinary search and $30.32\%$ for diverse search; the mean within-group reward range is $6.44\!\times\!10^{-5}$ and $8.66\!\times\!10^{-3}$, respectively. Token disagreement averages over all six candidate pairs and $10\times80$ token positions; it is distinct from perceptual diversity.

Diverse search also includes weaker candidates: its mean reward is $-0.14353$, compared with $-0.14041$ for ordinary search. These diagnostics illustrate reduced candidate redundancy and a broader quality distribution, rather than an immediate increase in average quality. Since advantages are group-normalized, a wider raw reward range alone does not establish stronger gradients or improved post-training performance. Table~\ref{tab:ordinary-beam-posttraining} complements these candidate-level diagnostics with final-model results.

\input{sections/ordinary_beam_appendix}

\input{sections/memory_appendix}

\input{sections/resource_appendix}

%% file: tables/decoding_sensitivity.tex
% Matched native-beam evaluation: 32 fixed clips/dataset, H10, width four.
% Values: mean LPIPS relative reduction (%) from matched SFT to MemSPO200.
\begin{table}[H]
\caption{LPIPS reduction (\%) across decoding settings.}
\label{tab:decoding-sensitivity}
\centering
\begingroup
\fontencoding{T1}\fontfamily{EBGaramond-TLF}\selectfont
\fontsize{9.5}{11}\selectfont
\definecolor{sensitivityhead}{HTML}{EAE6F2}
\definecolor{sensitivitystripe}{HTML}{F6F4F9}
\definecolor{sensitivityink}{HTML}{544178}
\setlength{\tabcolsep}{4pt}
\renewcommand{\arraystretch}{1.04}
\begin{tabularx}{\linewidth}{@{}>{\centering\arraybackslash}X>{\centering\arraybackslash}X>{\centering\arraybackslash}X>{\centering\arraybackslash}X@{}}
\toprule
\rowcolor{sensitivityhead}
\textbf{Temperature $T$} & \textbf{Top-$p$} & \textbf{RT-1} & \textbf{BridgeV2}\\
\midrule
\rowcolor{sensitivitystripe}
1.0 & 1.00 & \textcolor{sensitivityink}{4.02} & \textcolor{sensitivityink}{3.45}\\
0.8 & 1.00 & 3.03 & 2.97\\
1.2 & 1.00 & 4.32 & 2.60\\
1.0 & 0.95 & 3.70 & 3.84\\
1.0 & 0.90 & 4.04 & 3.53\\
\bottomrule
\end{tabularx}
\endgroup
\end{table}

\noindent\textbf{\textit{Decoding sensitivity.}}
Table~\ref{tab:decoding-sensitivity} varies temperature $T$ and the nucleus threshold $p$, keeping the beam width at four. On the same 32 clips per dataset, MemSPO reduces mean LPIPS under all five settings: 3.03--4.32\% on RT-1 and 2.60--3.84\% on BridgeV2. Each setting uses its own matched SFT reference, so the gains do not depend solely on the default decoding parameters.

%% file: tables/short_horizon_results.tex
\begingroup
\definecolor{resultink}{HTML}{292C42}
\definecolor{resultpurple}{HTML}{544178}
\definecolor{resultheader}{HTML}{E9E8F2}
\definecolor{resultstripe}{HTML}{F6F7FA}
\definecolor{resultours}{HTML}{EEE7F7}
\definecolor{rankfirst}{HTML}{E4D9F1}
\definecolor{ranksecond}{HTML}{F7E7D6}
\definecolor{rankthird}{HTML}{E0EDF4}
\definecolor{resultpending}{HTML}{7A7D8C}
\newcommand{\pendingresult}{\textcolor{resultpending}{\textendash}}
\newcommand{\pendingdataset}{\pendingresult & \pendingresult & \pendingresult}
\newcommand{\pendingrow}{\pendingdataset & \pendingdataset & \pendingdataset}
\newcommand{\resultcategory}[1]{%
  \addlinespace[2pt]
  \multicolumn{10}{@{}l}{\rule{0pt}{11pt}\textcolor{resultpurple}{\bfseries\itshape #1}}\\[1pt]}
\newcommand{\resultmetrics}{%
  {\fontsize{8.5}{9.6}\selectfont\mbox{PSNR}} &
  {\fontsize{8.5}{9.6}\selectfont\mbox{SSIM}} &
  {\fontsize{8.5}{9.6}\selectfont\mbox{LPIPS}}}

\begin{table}[H]
\centering
\caption{Ten-frame video prediction on 128 trajectories per reported dataset. Metrics: PSNR$\uparrow$, SSIM$\uparrow$, and LPIPS$\downarrow$.}
\label{tab:short-horizon-results}
\vspace{3pt}
\fontencoding{T1}\fontfamily{EBGaramond-TLF}\selectfont
\fontsize{9.0}{10.5}\selectfont
\color{resultink}
\setlength{\tabcolsep}{1.2pt}
\renewcommand{\arraystretch}{1.12}
\setlength{\aboverulesep}{2.4pt}
\setlength{\belowrulesep}{2.4pt}
% Only method names determine the first-column width; categories span all 10 columns.
\begin{tabularx}{\linewidth}{@{}p{0.25\linewidth}*{9}{>{\centering\arraybackslash}X}@{}}
\toprule[0.95pt]
\rowcolor{resultheader}
\rule{0pt}{10.5pt}\textbf{Method}
 & \multicolumn{3}{c}{\textbf{RT-1}}
 & \multicolumn{3}{c}{\textbf{BridgeV2}}
 & \multicolumn{3}{c@{}}{\textbf{RoboCasa}}\\[-0.5pt]
\cmidrule(lr){2-4}\cmidrule(lr){5-7}\cmidrule(l){8-10}
 & \resultmetrics & \resultmetrics & \resultmetrics\\[-0.5pt]
\midrule[0.55pt]
\resultcategory{Video Generation Models}
\rowcolor{resultstripe} FitVid & 19.89 & 0.6268 & 0.4529 & 17.86 & 0.6241 & 0.5079 & 14.29 & 0.5640 & 0.5373\\
iVideoGPT & 16.62 & 0.6142 & 0.3333 & 21.91 & 0.8142 & 0.1544 & 17.56 & 0.7513 & 0.2009\\
\resultcategory{Action-Conditioned World Models}
\rowcolor{resultstripe} IRASim & 18.64 & 0.6771 & 0.2747 & 18.65 & 0.7442 & 0.2120 & 14.75 & 0.6495 & 0.3069\\
Vid2World & 20.82 & 0.7803 & 0.1551 & 21.21 & 0.7648 & 0.1482 & 16.62 & 0.7452 & 0.1873\\
\rowcolor{resultstripe} Ctrl-World & 20.67 & 0.7548 & 0.2094 & 20.84 & 0.7456 & 0.2071 & 16.27 & 0.7335 & 0.2325\\
DriftWorld & 23.14 & 0.8185 & 0.1861 & 22.72 & 0.8103 & 0.1699 & 17.93 & 0.7746 & 0.2203\\
\rowcolor{resultstripe} WEAVER & 21.12 & 0.8038 & 0.1932 & 21.20 & 0.7929 & 0.1993 & 15.84 & 0.7523 & 0.2308\\
\resultcategory{RL Post-Training}
\rowcolor{resultstripe} RLVR-World & 21.24 & 0.7937 & 0.1447 & 21.33 & 0.7702 & 0.1398 & 16.98 & 0.7551 & 0.1652\\
PersistWorld & 20.56 & 0.7522 & 0.2154 & 21.62 & 0.7646 & 0.1907 & 16.97 & 0.7481 & 0.2166\\
\resultcategory{Adapted Video Foundation Models}
\rowcolor{resultstripe} Wan2.2-TI2V-5B & 20.12 & 0.7707 & 0.1892 & 19.03 & 0.7784 & 0.1737 & 15.53 & 0.7388 & 0.2165\\
Cosmos-Predict2.5-2B & 20.12 & 0.7746 & 0.2035 & 21.36 & 0.8365 & 0.1301 & 15.51 & 0.7382 & 0.2268\\
\rowcolor{resultstripe} DreamDojo-2B & 20.57 & 0.8197 & 0.1892 & 19.31 & 0.8190 & 0.1808 & 15.10 & 0.7454 & 0.2322\\
\addlinespace[3pt]\midrule[0.6pt]
\rowcolor{resultours}\rule{0pt}{13pt}\textcolor{resultpurple}{\bfseries FutureWorlds (ours)} & \textbf{24.21} & \textbf{0.8345} & \textbf{0.1252} & \textbf{24.34} & \textbf{0.8318} & \textbf{0.1072} & \textbf{20.18} & \textbf{0.8045} & \textbf{0.1434}\\[2pt]
\bottomrule[0.95pt]
\end{tabularx}
\end{table}
\endgroup

%% file: sections/ordinary_beam_appendix.tex
\clearpage
\subsection{Candidate Construction and Final Post-Training Quality}
\label{app:ordinary-beam-posttraining}

\noindent\textbf{\emph{Matched training and evaluation.}}
All RL methods start from the same dataset-specific SFT checkpoint at 50,000 updates and use 200 post-training updates, batch size 16, ten-frame rollouts, and the same first 3,200 training windows in order. We fix the base reward (Appendix~\ref{app:reward-design}), optimizer settings, reference policy, visual tokenizer, memory, and history-matched scoring. Ordinary beam search maintains eight global paths and returns the top four complete trajectories by cumulative model log probability. MemSPO uses four groups of two paths with a diversity penalty of $0.1$, returning one trajectory per group; GRPO samples four independent trajectories. At evaluation, all methods use beam width four, two initial observations, and anchor-plus-six-state memory without future ground-truth refreshes. The 10/20-frame scores use prefixes of the same 32-frame predictions.

\input{tables/ordinary_beam_posttraining}

\noindent\textbf{\emph{Evaluation consistency.}}
Every row uses GPU FP32 with TF32 disabled and the same ordered 128 cases within its dataset. BridgeV2 controls are reevaluated together with ordinary beam search; Table~\ref{tab:matched-post-training} retains its earlier CPU-generated BridgeV2 results. Checkpoints and inputs match, but CPU/GPU numerical differences can alter discrete beam decisions and subsequent rollouts, so these values are not interchangeable. RT-1 and RoboCasa reuse their verified matched GPU controls. Comparisons in this table use only its internally matched results.

\noindent\textbf{\emph{Final-model evidence.}}
MemSPO improves PSNR and LPIPS over ordinary-beam post-training on all three datasets at every tested horizon. At 32 frames, LPIPS falls by 7.37\%, 0.20\%, and 1.51\% on RT-1, BridgeV2, and RoboCasa, respectively. The advantage is metric-dependent: ordinary beam search has lower BridgeV2 MAE at 32 frames (0.042731 versus 0.042887). Together with Figure~\ref{fig:beam-candidates}, the final-model results support the usefulness of less redundant candidate construction for post-training. This compares global and grouped diverse search, including their different beam allocation; it does not isolate the diversity coefficient or establish statistical significance.

%% file: tables/ordinary_beam_posttraining.tex
\begingroup
\definecolor{beampurple}{HTML}{544178}
\definecolor{beamheader}{HTML}{E9E8F2}
\definecolor{beamstripe}{HTML}{F6F7FA}
\definecolor{beamours}{HTML}{EEE7F7}
\begin{table}[H]
\centering
\caption{\textbf{Post-training with different candidate construction strategies.} Matched GPU evaluation on 128 trajectories per dataset. PSNR$\uparrow$, SSIM$\uparrow$, and LPIPS$\downarrow$; bold marks the best score. Reduction rows compare MemSPO with ordinary beam search using unrounded LPIPS.}
\label{tab:ordinary-beam-posttraining}
\vspace{2pt}
\fontencoding{T1}\fontfamily{EBGaramond-TLF}\selectfont
\fontsize{10}{11.5}\selectfont
\setlength{\tabcolsep}{1.2pt}
\renewcommand{\arraystretch}{1.0}
\begin{tabularx}{\linewidth}{@{}p{0.23\linewidth}*{9}{>{\centering\arraybackslash}X}@{}}
\toprule[0.9pt]
\rowcolor{beamheader}
\textbf{Method (updates)} & \multicolumn{3}{c}{\textbf{10 frames}} & \multicolumn{3}{c}{\textbf{20 frames}} & \multicolumn{3}{c@{}}{\textbf{32 frames}}\\
\cmidrule(lr){2-4}\cmidrule(lr){5-7}\cmidrule(l){8-10}
 & PSNR & SSIM & LPIPS & PSNR & SSIM & LPIPS & PSNR & SSIM & LPIPS\\
\midrule[0.5pt]
\multicolumn{10}{@{}l}{\rule{0pt}{12pt}\textcolor{beampurple}{\bfseries\itshape RT-1}}\\[0pt]
\rowcolor{beamstripe} SFT (50k) & 23.76 & 0.8298 & 0.1308 & 22.49 & 0.8075 & 0.1527 & 21.37 & 0.7860 & 0.1734\\[1pt]
GRPO (200) & 23.97 & 0.8319 & 0.1289 & 22.60 & 0.8089 & 0.1517 & 21.39 & 0.7858 & 0.1739\\[1pt]
\rowcolor{beamstripe} Ordinary beam (200) & 24.03 & 0.8329 & 0.1279 & 22.85 & 0.8124 & 0.1479 & 21.70 & 0.7908 & 0.1690\\[1pt]
\rowcolor{beamours} \textcolor{beampurple}{\bfseries MemSPO (200)} & \textbf{24.33} & \textbf{0.8357} & \textbf{0.1240} & \textbf{23.42} & \textbf{0.8192} & \textbf{0.1397} & \textbf{22.45} & \textbf{0.8016} & \textbf{0.1565}\\[1pt]
\textcolor{beampurple}{\itshape LPIPS reduction} & \multicolumn{3}{c}{\textcolor{beampurple}{3.04\%}} & \multicolumn{3}{c}{\textcolor{beampurple}{5.54\%}} & \multicolumn{3}{c}{\textcolor{beampurple}{7.37\%}}\\[1pt]
\noalign{\vskip 2pt\hbox to\linewidth{\color{gray}\leaders\hbox{\rule{2pt}{0.35pt}\hskip 2pt}\hfill}\vskip 1pt}
\multicolumn{10}{@{}l}{\rule{0pt}{12pt}\textcolor{beampurple}{\bfseries\itshape BridgeV2}}\\[0pt]
\rowcolor{beamstripe} SFT (50k) & 24.06 & 0.8300 & 0.1095 & 22.74 & 0.8085 & 0.1270 & 21.74 & 0.7942 & 0.1448\\[1pt]
GRPO (200) & 24.11 & 0.8300 & 0.1090 & 22.78 & 0.8086 & 0.1266 & 21.79 & 0.7944 & 0.1442\\[1pt]
\rowcolor{beamstripe} Ordinary beam (200) & 24.08 & 0.8298 & 0.1095 & 22.86 & 0.8091 & 0.1259 & 21.89 & 0.7956 & 0.1424\\[1pt]
\rowcolor{beamours} \textcolor{beampurple}{\bfseries MemSPO (200)} & \textbf{24.31} & \textbf{0.8320} & \textbf{0.1073} & \textbf{22.99} & \textbf{0.8108} & \textbf{0.1244} & \textbf{21.95} & \textbf{0.7960} & \textbf{0.1421}\\[1pt]
\textcolor{beampurple}{\itshape LPIPS reduction} & \multicolumn{3}{c}{\textcolor{beampurple}{1.99\%}} & \multicolumn{3}{c}{\textcolor{beampurple}{1.18\%}} & \multicolumn{3}{c}{\textcolor{beampurple}{0.20\%}}\\[1pt]
\noalign{\vskip 2pt\hbox to\linewidth{\color{gray}\leaders\hbox{\rule{2pt}{0.35pt}\hskip 2pt}\hfill}\vskip 1pt}
\multicolumn{10}{@{}l}{\rule{0pt}{12pt}\textcolor{beampurple}{\bfseries\itshape RoboCasa}}\\[0pt]
\rowcolor{beamstripe} SFT (50k) & 19.50 & 0.7957 & 0.1513 & 18.22 & 0.7689 & 0.1787 & 17.50 & 0.7519 & 0.1971\\[1pt]
GRPO (200) & 19.68 & 0.7965 & 0.1514 & 18.64 & 0.7729 & 0.1758 & 18.03 & 0.7581 & 0.1919\\[1pt]
\rowcolor{beamstripe} Ordinary beam (200) & 19.69 & \textbf{0.7985} & 0.1490 & 18.57 & \textbf{0.7746} & 0.1745 & 17.98 & \textbf{0.7607} & 0.1906\\[1pt]
\rowcolor{beamours} \textcolor{beampurple}{\bfseries MemSPO (200)} & \textbf{19.92} & 0.7984 & \textbf{0.1476} & \textbf{18.88} & 0.7743 & \textbf{0.1718} & \textbf{18.29} & 0.7600 & \textbf{0.1877}\\[1pt]
\textcolor{beampurple}{\itshape LPIPS reduction} & \multicolumn{3}{c}{\textcolor{beampurple}{0.94\%}} & \multicolumn{3}{c}{\textcolor{beampurple}{1.53\%}} & \multicolumn{3}{c}{\textcolor{beampurple}{1.51\%}}\\[1pt]
\bottomrule[0.9pt]
\end{tabularx}
\end{table}
\endgroup

%% file: sections/memory_appendix.tex
\clearpage
\section{Memory Ablation: Protocol and Full Results}
\label{app:memory-ablation}

\noindent\textbf{\emph{Controlled inference-time intervention.}}
We hold the MemSPO checkpoint at 200 updates, beam width at four, and the ordered set of 128 held-out trajectories per dataset fixed. Actions, language conditioning where applicable, and the visual tokenizer are identical across memory variants. Only the retained state history changes; the variants do not undergo separate training. Each evaluation generates 32 frames once, with its first 10 and 20 frames providing the shorter-horizon results. Evaluation uses GPU FP32 with TF32 disabled, and no ground-truth future frame refreshes the rollout. These settings reproduce the comparison in Figure~\ref{fig:memory-ablation}; the main-table checkpoints use different post-training budgets and reward configurations as detailed in Appendix~\ref{app:reward-design}.

\noindent\textbf{\emph{Memory configurations.}}
\textit{w/o Historical Memory} retains only the latest state frame, excluding earlier dynamic history.
\textit{w/o Initial Anchor} retains a sliding window of the six most recent state frames without a persistent initial state anchor.
\textit{Full Model} retains the initial state anchor alongside the six most recent states.
All three variants retain the same fixed initial scene encoding; removing historical memory therefore does not remove all initial visual information. State frames retain their corresponding action tokens, and the action indexing and causal prediction order remain unchanged.

\input{tables/memory_ablation}

\noindent\textbf{\emph{Results and interpretation.}}
Table~\ref{tab:memory-full} reports the three metrics shown in Figure~\ref{fig:memory-ablation} at all evaluated horizons. Retaining recent history yields the largest improvement over removing historical memory on every dataset. At 32 frames, the full model achieves the best PSNR, SSIM, and MSE, reducing MSE by 40.83\%, 57.69\%, and 29.22\% relative to the history-removed variant on RT-1, BridgeV2, and RoboCasa. The initial anchor provides smaller additional gains. Its effect varies with horizon and metric: on RoboCasa, removing the anchor yields slightly lower MSE at 10 and 20 frames, while the full model retains higher PSNR and SSIM. These results distinguish the substantial role of recent context from the more modest contribution of a persistent anchor. We report cohort means without inferring statistical significance.

%% file: tables/memory_ablation.tex
% Generated from the verified 128-trajectory memory ablation aggregates.
\begingroup
\definecolor{mempurple}{HTML}{544178}
\definecolor{memheader}{HTML}{E9E8F2}
\definecolor{memstripe}{HTML}{F6F7FA}
\definecolor{memours}{HTML}{EEE7F7}
\begin{table}[H]
\centering
\caption{\textbf{Memory ablation across prediction horizons.} PSNR$\uparrow$ (dB), SSIM$\uparrow$ ($\times100$), and MSE$\downarrow$ ($\times10^3$), averaged over 128 trajectories per dataset. Bold marks the best unrounded value within each dataset and horizon; display values use two decimals.}
\label{tab:memory-full}
\vspace{3pt}
\fontencoding{T1}\fontfamily{EBGaramond-TLF}\selectfont
\fontsize{10.5}{12}\selectfont
\setlength{\tabcolsep}{1.2pt}
\renewcommand{\arraystretch}{1.05}
\begin{tabularx}{\linewidth}{@{}p{0.25\linewidth}*{9}{>{\centering\arraybackslash}X}@{}}
\toprule[0.9pt]
\rowcolor{memheader}
\textbf{Memory configuration} & \multicolumn{3}{c}{\textbf{10 frames}} & \multicolumn{3}{c}{\textbf{20 frames}} & \multicolumn{3}{c@{}}{\textbf{32 frames}}\\
\cmidrule(lr){2-4}\cmidrule(lr){5-7}\cmidrule(l){8-10}
 & PSNR & SSIM & MSE & PSNR & SSIM & MSE & PSNR & SSIM & MSE\\
\midrule[0.5pt]
\multicolumn{10}{@{}l}{\rule{0pt}{13pt}\textcolor{mempurple}{\bfseries\itshape RT-1}}\\[1pt]
\rowcolor{memstripe} w/o Historical Memory & 22.45 & 81.81 & 7.37 & 20.81 & 78.83 & 11.00 & 19.89 & 76.73 & 13.21\\[1pt]
w/o Initial Anchor & 24.32 & 83.56 & 4.57 & 23.39 & 81.85 & 6.12 & 22.43 & 80.07 & 7.99\\[1pt]
\rowcolor{memours} \textcolor{mempurple}{\bfseries Full Model} & \textbf{24.33} & \textbf{83.57} & \textbf{4.56} & \textbf{23.42} & \textbf{81.92} & \textbf{6.01} & \textbf{22.45} & \textbf{80.16} & \textbf{7.82}\\[1pt]
\noalign{\vskip 3pt\hbox to\linewidth{\color{gray}\leaders\hbox{\rule{2pt}{0.35pt}\hskip 2pt}\hfill}\vskip 2pt}
\multicolumn{10}{@{}l}{\rule{0pt}{13pt}\textcolor{mempurple}{\bfseries\itshape BridgeV2}}\\[1pt]
\rowcolor{memstripe} w/o Historical Memory & 20.04 & 79.82 & 14.38 & 18.63 & 77.26 & 18.34 & 17.95 & 75.67 & 20.51\\[1pt]
w/o Initial Anchor & 24.29 & 83.17 & 4.94 & 22.94 & 81.01 & 6.99 & 21.93 & 79.58 & 8.70\\[1pt]
\rowcolor{memours} \textcolor{mempurple}{\bfseries Full Model} & \textbf{24.31} & \textbf{83.20} & \textbf{4.91} & \textbf{22.99} & \textbf{81.08} & \textbf{6.90} & \textbf{21.95} & \textbf{79.60} & \textbf{8.68}\\[1pt]
\noalign{\vskip 3pt\hbox to\linewidth{\color{gray}\leaders\hbox{\rule{2pt}{0.35pt}\hskip 2pt}\hfill}\vskip 2pt}
\multicolumn{10}{@{}l}{\rule{0pt}{13pt}\textcolor{mempurple}{\bfseries\itshape RoboCasa}}\\[1pt]
\rowcolor{memstripe} w/o Historical Memory & 18.21 & 77.28 & 20.92 & 17.21 & 74.69 & 26.63 & 16.70 & 73.18 & 29.88\\[1pt]
w/o Initial Anchor & 19.91 & 79.83 & \textbf{13.77} & 18.85 & 77.39 & \textbf{18.34} & 18.24 & 75.91 & 21.18\\[1pt]
\rowcolor{memours} \textcolor{mempurple}{\bfseries Full Model} & \textbf{19.92} & \textbf{79.84} & 13.79 & \textbf{18.88} & \textbf{77.43} & 18.35 & \textbf{18.29} & \textbf{76.00} & \textbf{21.15}\\[1pt]
\bottomrule[0.9pt]
\end{tabularx}
\end{table}
\endgroup

%% file: sections/resource_appendix.tex
\clearpage
\section{Inference Resource Measurements}
\label{app:resource-measurements}

\noindent\textbf{\emph{Benchmark scope.}}
Table~\ref{tab:resource-measurements} complements Figure~\ref{fig:efficiency-bubble} with parameters, memory, and latency for all seven plotted models. Each model uses one NVIDIA H20 on the same node, batch size one, and the first three fixed trajectories in the BridgeV2 evaluation cohort. Models run on separate GPUs rather than sharing a device. Each returns 32 RGB frames at $256\times320$ resolution, using its native checkpoint, internal resolution, input interface, and decoding settings. Costs use three cases, whereas plotted quality uses all 128 evaluation trajectories.

\input{tables/resource_measurements}

\noindent\textbf{\emph{Parameter accounting.}}
Inference parameters include the prediction backbone, action/history adapters, and visual encoder--decoder, counting tied tensors once. Offline language encoders and unused task, reward, or critic modules are excluded. Loaded parameters additionally include retained unused modules inventoried in the native pipeline: WEAVER retains a task encoder and reward/critic networks, while PersistWorld retains unused image and text encoders. FutureWorlds contains 127,101,696 backbone parameters and 129,448,016 visual-codec parameters, totaling 256,549,712. Its frozen RL reference and training reward network are absent from inference.

\noindent\textbf{\emph{Warmup and timing.}}
We preload input image/action arrays and model weights, disable gradients and TF32, and perform one complete 32-frame warmup. We then time each of the three cases once using a wall-clock timer, synchronizing CUDA immediately before and after inference. The table reports the median and observed range, not a confidence interval. Timing includes input preprocessing, CPU--GPU transfers, visual encoding, prediction, decoding, resizing, and transfer of the output to CPU. Model loading, trajectory-file reading, video saving, and quality metrics are excluded. Text features are precomputed. Wan loads and transfers its cached instruction embedding inside the timed call; this cache-file access is included, while language encoding is excluded.

\noindent\textbf{\emph{Memory and residency.}}
Before each measured call, we reset PyTorch's CUDA peak-memory counters after synchronization. Allocated memory counts live tensor allocations; reserved memory also includes the allocator's cached blocks. We report the maximum of each over the three calls, in GiB ($2^{30}$ bytes). Figure~\ref{fig:efficiency-bubble} uses allocated memory. These measurements include resident model tensors and inference intermediates, not just incremental activation memory. Active components reside on GPU; the benchmark does not add dynamic CPU offloading. WEAVER's unused EMA copy moves to CPU before warmup, whereas the unused modules counted above remain in the native GPU pipeline. The resulting memory costs describe these implementations, rather than minimum architectural requirements.

\clearpage
\subsection{Native Precision, Decoding, and Caches}
\label{app:resource-configurations}

Table~\ref{tab:resource-configurations} records the model-specific execution settings. FP32 and BF16 denote parameter storage; autocast denotes mixed-precision execution with the original storage retained. We preserve native differences rather than forcing a shared precision or resolution that changes the evaluated pipelines. All models produce the same final output dimensions, but internal processing costs differ.

\begingroup
\begin{table}[H]
\centering
\caption{\textbf{Native settings for the resource benchmark.} Resolution is internal height $\times$ width. Chunk lengths count retained future frames; Cosmos and DreamDojo generate a full final chunk and retain its first eight frames.}
\label{tab:resource-configurations}
\vspace{3pt}
\fontencoding{T1}\fontfamily{EBGaramond-TLF}\selectfont
\fontsize{10}{11.5}\selectfont
\setlength{\tabcolsep}{3pt}
\renewcommand{\arraystretch}{1.18}
\begin{tabularx}{\linewidth}{@{}>{\raggedright\arraybackslash}p{0.23\linewidth}>{\raggedright\arraybackslash}p{0.13\linewidth}>{\raggedright\arraybackslash}p{0.24\linewidth}>{\raggedright\arraybackslash}X@{}}
\toprule[0.9pt]
\textbf{Model} & \textbf{Resolution} & \textbf{Precision} & \textbf{Generation configuration}\\
\midrule[0.5pt]
iVideoGPT & $256\times256$ & FP32; BF16 autocast & Sampling, $T=1$, top-$k=100$; chunks $10+10+10+2$.\\[3pt]
WEAVER & $192\times320$ & FP32; BF16 autocast & Native full-memory generation, 16 denoising steps; history 2, memory 6.\\[3pt]
PersistWorld & $256\times320$ & BF16 predictor/action encoder; FP32 VAE & 25 denoising steps, guidance 2; retained chunks $10+10+10+2$.\\[3pt]
Wan2.2-TI2V-5B & $256\times320$ & FP32; BF16 autocast & UniPC, 50 steps, guidance 5, shift 5; direct 32-frame prediction.\\[3pt]
Cosmos-Predict2.5-2B & $256\times320$ & BF16 predictor and codec & 35 denoising steps, guidance 0; retained chunks $12+12+8$.\\[3pt]
DreamDojo-2B & $480\times640$ & BF16; BF16 autocast & 35 denoising steps, guidance 0; retained chunks $12+12+8$.\\[3pt]
FutureWorlds & $256\times320$ & FP32, without autocast & Beam width 4; persistent memory; 32 autoregressive frames.\\
\bottomrule[0.9pt]
\end{tabularx}
\end{table}
\endgroup

\noindent\textbf{\emph{Caching and software.}}
Input trajectories are loaded into CPU arrays before measurement; visual encoding remains inside the timed call. Language features are computed offline where used. FutureWorlds reuses the key--value cache within each frame's beam search and starts a new generation call for the next frame's retained prefix. Histories and predicted-state caches are rebuilt for each rollout, rather than reused from the warmup or another case. CUDA allocator caches remain warm between measured calls: the harness calls \texttt{empty\_cache} before warmup, not between the three timed runs. PyTorch versions are 2.9.1 for FutureWorlds, iVideoGPT, and Wan; 2.7.0+cu128 for WEAVER, Cosmos, and DreamDojo; and 2.5.1+cu124 for PersistWorld.

\noindent\textbf{\emph{Interpretation.}}
FutureWorlds uses the fewest inference parameters and the lowest observed allocated memory among these seven implementations, while achieving the best BridgeV2 LPIPS in the main comparison. It is slower than iVideoGPT and several diffusion pipelines under their native settings. The measurements therefore support a quality--model-size--memory tradeoff, not universal inference-speed superiority. Native precision, internal resolution, retained modules, and runtime versions all affect cost; three timed cases provide a small-sample operational comparison rather than a hardware-independent complexity estimate. We do not report FLOPs.

%% file: tables/resource_measurements.tex
\begingroup
\definecolor{respurple}{HTML}{544178}
\definecolor{resheader}{HTML}{E9E8F2}
\definecolor{resstripe}{HTML}{F6F7FA}
\definecolor{resours}{HTML}{EEE7F7}
\begin{table}[H]
\centering
\caption{\textbf{Native inference resource measurements on BridgeV2.} One H20 per model, batch size one, and 32 predicted frames. Latency summarizes three fixed cases after one complete warmup. Peak memory is the maximum across the three measured runs.}
\label{tab:resource-measurements}
\vspace{3pt}
\fontencoding{T1}\fontfamily{EBGaramond-TLF}\selectfont
\fontsize{10}{11.5}\selectfont
\setlength{\tabcolsep}{2pt}
\renewcommand{\arraystretch}{1.12}
\begin{tabularx}{\linewidth}{@{}p{0.27\linewidth}*{5}{>{\centering\arraybackslash}X}>{\centering\arraybackslash}p{0.15\linewidth}@{}}
\toprule[0.9pt]
\rowcolor{resheader}
\textbf{Model} & \multicolumn{2}{c}{\textbf{Parameters (B)}} & \multicolumn{2}{c}{\textbf{Peak memory (GiB)}} & \multicolumn{2}{c@{}}{\textbf{Latency (s)}}\\
\cmidrule(lr){2-3}\cmidrule(lr){4-5}\cmidrule(l){6-7}
 & Inference & Loaded & Allocated & Reserved & Median & Range\\
\midrule[0.5pt]
\rowcolor{resstripe} iVideoGPT & 0.449 & 0.449 & 5.85 & 6.53 & 4.14 & 4.12--4.16\\[2pt]
WEAVER & 0.998 & 1.134 & 7.52 & 9.02 & 6.31 & 6.29--6.31\\[2pt]
\rowcolor{resstripe} PersistWorld & 1.624 & 2.320 & 6.68 & 7.84 & 22.18 & 22.16--22.18\\[2pt]
Wan2.2-TI2V-5B & 5.754 & 5.754 & 23.06 & 23.72 & 11.18 & 11.18--11.19\\[2pt]
\rowcolor{resstripe} Cosmos-Predict2.5-2B & 2.254 & 2.254 & 5.97 & 6.85 & 12.83 & 12.77--12.83\\[2pt]
DreamDojo-2B & 2.278 & 2.278 & 12.43 & 16.26 & 47.17 & 47.11--47.19\\[2pt]
\rowcolor{resours} \textcolor{respurple}{\bfseries FutureWorlds (ours)} & 0.257 & 0.257 & 1.89 & 3.91 & 26.76 & 26.68--26.77\\[2pt]
\bottomrule[0.9pt]
\end{tabularx}
\end{table}
\endgroup

%% file: iclr2027_conference.bbl
\begin{thebibliography}{49}
\providecommand{\natexlab}[1]{#1}
\providecommand{\url}[1]{\texttt{#1}}
\expandafter\ifx\csname urlstyle\endcsname\relax
  \providecommand{\doi}[1]{doi: #1}\else
  \providecommand{\doi}{doi: \begingroup \urlstyle{rm}\Url}\fi

\bibitem[Anthony et~al.(2017)Anthony, Tian, and Barber]{anthony2017expert}
Thomas Anthony, Zheng Tian, and David Barber.
\newblock {Thinking Fast and Slow with Deep Learning and Tree Search}.
\newblock In \emph{Advances in Neural Information Processing Systems}, 2017.
\newblock URL \url{https://arxiv.org/abs/1705.08439}.

\bibitem[Babaeizadeh et~al.(2021)Babaeizadeh, Saffar, Nair, Levine, Finn, and Erhan]{babaeizadeh2021fitvid}
Mohammad Babaeizadeh, Mohammad~Taghi Saffar, Suraj Nair, Sergey Levine, Chelsea Finn, and Dumitru Erhan.
\newblock {FitVid: Overfitting in Pixel-Level Video Prediction}.
\newblock \emph{arXiv preprint arXiv:2106.13195}, 2021.
\newblock URL \url{https://arxiv.org/abs/2106.13195}.

\bibitem[Bardhan et~al.(2026)Bardhan, Drozd\'{i}k, \v{S}ivic, and Petr\'{i}k]{bardhan2026persistworld}
Jai Bardhan, Patrik Drozd\'{i}k, Josef \v{S}ivic, and Vladim\'{i}r Petr\'{i}k.
\newblock {PersistWorld: Stabilizing Multi-step Robot World Model Rollouts via Reinforcement Learning}.
\newblock \emph{arXiv preprint}, 2026.
\newblock URL \url{https://jaibardhan.com/persistworld/}.

\bibitem[Bengio et~al.(2015)Bengio, Vinyals, Jaitly, and Shazeer]{bengio2015sampling}
Samy Bengio, Oriol Vinyals, Navdeep Jaitly, and Noam Shazeer.
\newblock {Scheduled Sampling for Sequence Prediction with Recurrent Neural Networks}.
\newblock In \emph{Advances in Neural Information Processing Systems}, 2015.
\newblock URL \url{https://arxiv.org/abs/1506.03099}.

\bibitem[Black et~al.(2024)Black, Janner, Du, Kostrikov, and Levine]{black2024ddpo}
Kevin Black, Michael Janner, Yilun Du, Ilya Kostrikov, and Sergey Levine.
\newblock {Training Diffusion Models with Reinforcement Learning}.
\newblock In \emph{International Conference on Learning Representations}, 2024.
\newblock URL \url{https://arxiv.org/abs/2305.13301}.

\bibitem[Brohan et~al.(2023)Brohan, Brown, Carbajal, Chebotar, Dabis, Finn, Gopalakrishnan, Hausman, Herzog, Hsu, Ibarz, Ichter, Irpan, Jackson, Jesmonth, Joshi, Julian, Kalashnikov, Kuang, Leal, Lee, Levine, Lu, Malla, Manjunath, Mordatch, Nachum, Parada, Peralta, Perez, Pertsch, Quiambao, Rao, Ryoo, Salazar, Sanketi, Sayed, Singh, Sontakke, Stone, Tan, Tran, Vanhoucke, Vega, Vuong, Xia, Xiao, Xu, Xu, Yu, and Zitkovich]{brohan2023rt1}
Anthony Brohan, Noah Brown, Justice Carbajal, Yevgen Chebotar, Joseph Dabis, Chelsea Finn, Keerthana Gopalakrishnan, Karol Hausman, Alex Herzog, Jasmine Hsu, Julian Ibarz, Brian Ichter, Alex Irpan, Tomas Jackson, Sally Jesmonth, Nikhil~J Joshi, Ryan Julian, Dmitry Kalashnikov, Yuheng Kuang, Isabel Leal, Kuang-Huei Lee, Sergey Levine, Yao Lu, Utsav Malla, Deeksha Manjunath, Igor Mordatch, Ofir Nachum, Carolina Parada, Jodilyn Peralta, Emily Perez, Karl Pertsch, Jornell Quiambao, Kanishka Rao, Michael Ryoo, Grecia Salazar, Pannag Sanketi, Kevin Sayed, Jaspiar Singh, Sumedh Sontakke, Austin Stone, Clayton Tan, Huong Tran, Vincent Vanhoucke, Steve Vega, Quan Vuong, Fei Xia, Ted Xiao, Peng Xu, Sichun Xu, Tianhe Yu, and Brianna Zitkovich.
\newblock {RT-1: Robotics Transformer for Real-World Control at Scale}.
\newblock In \emph{Robotics: Science and Systems}, 2023.
\newblock URL \url{https://roboticsproceedings.org/rss19/p025.html}.

\bibitem[Dai et~al.(2019)Dai, Yang, Yang, Carbonell, Le, and Salakhutdinov]{dai2019transformerxl}
Zihang Dai, Zhilin Yang, Yiming Yang, Jaime~G. Carbonell, Quoc Le, and Ruslan Salakhutdinov.
\newblock {Transformer-XL: Attentive Language Models beyond a Fixed-Length Context}.
\newblock In \emph{Proceedings of the 57th Annual Meeting of the Association for Computational Linguistics}, pp.\  2978--2988, 2019.
\newblock \doi{10.18653/v1/P19-1285}.
\newblock URL \url{https://aclanthology.org/P19-1285/}.

\bibitem[Dong et~al.(2026)Dong, Jia, Finn, and Sadigh]{dong2026qwm}
Perry Dong, Yueru Jia, Chelsea Finn, and Dorsa Sadigh.
\newblock {Q-Learning With World Models}.
\newblock \emph{arXiv preprint arXiv:2608.17163}, 2026.
\newblock URL \url{https://arxiv.org/abs/2608.17163}.

\bibitem[Ebert et~al.(2018)Ebert, Finn, Dasari, Xie, Lee, and Levine]{ebert2018visual}
Frederik Ebert, Chelsea Finn, Sudeep Dasari, Annie Xie, Alex Lee, and Sergey Levine.
\newblock {Visual Foresight: Model-Based Deep Reinforcement Learning for Vision-Based Robotic Control}.
\newblock \emph{arXiv preprint arXiv:1812.00568}, 2018.
\newblock URL \url{https://arxiv.org/abs/1812.00568}.

\bibitem[Finn et~al.(2016)Finn, Goodfellow, and Levine]{finn2016physical}
Chelsea Finn, Ian Goodfellow, and Sergey Levine.
\newblock {Unsupervised Learning for Physical Interaction through Video Prediction}.
\newblock In \emph{Advances in Neural Information Processing Systems}, 2016.
\newblock URL \url{https://arxiv.org/abs/1605.07157}.

\bibitem[Gao et~al.(2026)Gao, Liang, Zheng, Malik, Ye, Yu, et~al.]{gao2026dreamdojo}
Shenyuan Gao, William Liang, Kaiyuan Zheng, Ayaan Malik, Seonghyeon Ye, Sihyun Yu, et~al.
\newblock {DreamDojo}: A generalist robot world model from large-scale human videos.
\newblock \emph{arXiv preprint arXiv:2602.06949}, 2026.
\newblock URL \url{https://arxiv.org/abs/2602.06949}.

\bibitem[{GigaWorld Team} et~al.(2026){GigaWorld Team}, Ma, Wang, Li, Ni, Li, Huang, Zhao, Li, Li, Liu, Lu, Deng, Yu, Xu, Zhou, Xu, Chen, Wang, Tian, Wang, Chang, Zhou, Ye, Wu, Wu, and Zhu]{gigaworld2026}
{GigaWorld Team}, Angyuan Ma, Boyuan Wang, Bohan Li, Chaojun Ni, Guo Li, Guan Huang, Guosheng Zhao, Hao Li, Hengtao Li, Jingyu Liu, Jiwen Lu, Qiuping Deng, Tingdong Yu, Xuancheng Xu, Xinyu Zhou, Xiuwei Xu, Xinze Chen, Xiaofeng Wang, Xiaoyu Tian, Yang Wang, Yifan Chang, Yukun Zhou, Yun Ye, Zhenyu Wu, Zhanqian Wu, and Zheng Zhu.
\newblock {GigaWorld-1: A Roadmap to Build World Models for Robot Policy Evaluation}.
\newblock \emph{arXiv preprint arXiv:2607.02642}, 2026.
\newblock URL \url{https://arxiv.org/abs/2607.02642}.

\bibitem[Guo et~al.(2026)Guo, Shi, Chen, and Finn]{guo2026ctrlworld}
Yanjiang Guo, Lucy~Xiaoyang Shi, Jianyu Chen, and Chelsea Finn.
\newblock {Ctrl-World: A Controllable Generative World Model for Robot Manipulation}.
\newblock In \emph{International Conference on Learning Representations}, 2026.
\newblock URL \url{https://arxiv.org/abs/2510.10125}.

\bibitem[Hafner et~al.(2019)Hafner, Lillicrap, Fischer, Villegas, Ha, Lee, and Davidson]{hafner2019planet}
Danijar Hafner, Timothy Lillicrap, Ian Fischer, Ruben Villegas, David Ha, Honglak Lee, and James Davidson.
\newblock {Learning Latent Dynamics for Planning from Pixels}.
\newblock In \emph{International Conference on Machine Learning}, pp.\  2555--2565, 2019.
\newblock URL \url{https://proceedings.mlr.press/v97/hafner19a.html}.

\bibitem[Hafner et~al.(2020)Hafner, Lillicrap, Ba, and Norouzi]{hafner2020dreamer}
Danijar Hafner, Timothy Lillicrap, Jimmy Ba, and Mohammad Norouzi.
\newblock {Dream to Control: Learning Behaviors by Latent Imagination}.
\newblock In \emph{International Conference on Learning Representations}, 2020.
\newblock URL \url{https://arxiv.org/abs/1912.01603}.

\bibitem[Henschel et~al.(2025)Henschel, Khachatryan, Poghosyan, Hayrapetyan, Tadevosyan, Wang, Navasardyan, and Shi]{henschel2025streamingt2v}
Roberto Henschel, Levon Khachatryan, Hayk Poghosyan, Daniil Hayrapetyan, Vahram Tadevosyan, Zhangyang Wang, Shant Navasardyan, and Humphrey Shi.
\newblock {StreamingT2V: Consistent, Dynamic, and Extendable Long Video Generation from Text}.
\newblock In \emph{Proceedings of the IEEE/CVF Conference on Computer Vision and Pattern Recognition}, pp.\  2568--2577, 2025.
\newblock URL \url{https://arxiv.org/abs/2403.14773}.

\bibitem[Holtzman et~al.(2020)Holtzman, Buys, Du, Forbes, and Choi]{holtzman2020degeneration}
Ari Holtzman, Jan Buys, Li~Du, Maxwell Forbes, and Yejin Choi.
\newblock {The Curious Case of Neural Text Degeneration}.
\newblock In \emph{International Conference on Learning Representations}, 2020.
\newblock URL \url{https://arxiv.org/abs/1904.09751}.

\bibitem[Hu et~al.(2021)Hu, Shen, Wallis, Allen-Zhu, Li, Wang, Wang, and Chen]{hu2021lora}
Edward~J. Hu, Yelong Shen, Phillip Wallis, Zeyuan Allen-Zhu, Yuanzhi Li, Shean Wang, Lu~Wang, and Weizhu Chen.
\newblock {LoRA}: Low-rank adaptation of large language models.
\newblock \emph{arXiv preprint arXiv:2106.09685}, 2021.
\newblock URL \url{https://arxiv.org/abs/2106.09685}.

\bibitem[Huang et~al.(2026)Huang, Wu, Zhou, Miao, and Long]{huang2026vid2world}
Siqiao Huang, Jialong Wu, Qixing Zhou, Shangchen Miao, and Mingsheng Long.
\newblock {Vid2World: Crafting Video Diffusion Models to Interactive World Models}.
\newblock In \emph{International Conference on Learning Representations}, 2026.
\newblock URL \url{https://arxiv.org/abs/2505.14357}.

\bibitem[Jain et~al.(2026)Jain, Wu, Farebrother, Swamy, and Bajcsy]{jain2026weaver}
Arnav~Kumar Jain, Yilin Wu, Jesse Farebrother, Gokul Swamy, and Andrea Bajcsy.
\newblock {WEAVER, Better, Faster, Longer: An Effective World Model for Robotic Manipulation}.
\newblock \emph{arXiv preprint arXiv:2606.13672}, 2026.
\newblock URL \url{https://arxiv.org/abs/2606.13672}.

\bibitem[Lee et~al.(2025)Lee, Huh, Su, and Yu]{lee2025bgrpo}
Jaeha Lee, Gio Huh, Ning Su, and Tony~Yue Yu.
\newblock {Discovering Hidden Algebraic Structures via Transformers with Rank-Aware Beam GRPO}.
\newblock \emph{arXiv preprint arXiv:2508.15766}, 2025.
\newblock URL \url{https://arxiv.org/abs/2508.15766}.

\bibitem[Lu et~al.(2026)Lu, Chen, Ye, and Du]{lu2026driftworld}
Susie Lu, Haonan Chen, Weirui Ye, and Yilun Du.
\newblock {DriftWorld: Fast World Modeling through Drifting}.
\newblock \emph{arXiv preprint arXiv:2607.15065}, 2026.
\newblock URL \url{https://arxiv.org/abs/2607.15065}.

\bibitem[Mentzer et~al.(2024)Mentzer, Minnen, Agustsson, and Tschannen]{mentzer2024fsq}
Fabian Mentzer, David Minnen, Eirikur Agustsson, and Michael Tschannen.
\newblock Finite scalar quantization: {VQ-VAE} made simple.
\newblock In \emph{International Conference on Learning Representations}, 2024.
\newblock URL \url{https://arxiv.org/abs/2309.15505}.

\bibitem[Nasiriany et~al.(2026)Nasiriany, Nasiriany, Maddukuri, and Zhu]{nasiriany2026robocasa365}
Soroush Nasiriany, Sepehr Nasiriany, Abhiram Maddukuri, and Yuke Zhu.
\newblock {RoboCasa365: A Large-Scale Simulation Framework for Training and Benchmarking Generalist Robots}.
\newblock In \emph{International Conference on Learning Representations}, 2026.
\newblock URL \url{https://robocasa.ai/assets/robocasa365_iclr26.pdf}.

\bibitem[{NVIDIA}(2025)]{nvidia2025cosmos25}
{NVIDIA}.
\newblock {World Simulation with Video Foundation Models for Physical AI}.
\newblock \emph{arXiv preprint arXiv:2511.00062}, 2025.
\newblock URL \url{https://arxiv.org/abs/2511.00062}.

\bibitem[Raffel et~al.(2020)Raffel, Shazeer, Roberts, Lee, Narang, Matena, Zhou, Li, and Liu]{raffel2020t5}
Colin Raffel, Noam Shazeer, Adam Roberts, Katherine Lee, Sharan Narang, Michael Matena, Yanqi Zhou, Wei Li, and Peter~J. Liu.
\newblock Exploring the limits of transfer learning with a unified text-to-text transformer.
\newblock \emph{Journal of Machine Learning Research}, 21\penalty0 (140):\penalty0 1--67, 2020.
\newblock URL \url{https://jmlr.org/papers/v21/20-074.html}.

\bibitem[Ranzato et~al.(2016)Ranzato, Chopra, Auli, and Zaremba]{ranzato2016sequence}
Marc'Aurelio Ranzato, Sumit Chopra, Michael Auli, and Wojciech Zaremba.
\newblock {Sequence Level Training with Recurrent Neural Networks}.
\newblock In \emph{International Conference on Learning Representations}, 2016.
\newblock URL \url{https://arxiv.org/abs/1511.06732}.

\bibitem[Rennie et~al.(2017)Rennie, Marcheret, Mroueh, Ross, and Goel]{rennie2017scst}
Steven~J. Rennie, Etienne Marcheret, Youssef Mroueh, Jerret Ross, and Vaibhava Goel.
\newblock {Self-Critical Sequence Training for Image Captioning}.
\newblock In \emph{Proceedings of the IEEE Conference on Computer Vision and Pattern Recognition}, 2017.
\newblock URL \url{https://arxiv.org/abs/1612.00563}.

\bibitem[Ross et~al.(2011)Ross, Gordon, and Bagnell]{ross2011dagger}
Stephane Ross, Geoffrey Gordon, and Drew Bagnell.
\newblock {A Reduction of Imitation Learning and Structured Prediction to No-Regret Online Learning}.
\newblock In \emph{International Conference on Artificial Intelligence and Statistics}, pp.\  627--635, 2011.
\newblock URL \url{https://proceedings.mlr.press/v15/ross11a.html}.

\bibitem[Schulman et~al.(2017)Schulman, Wolski, Dhariwal, Radford, and Klimov]{schulman2017ppo}
John Schulman, Filip Wolski, Prafulla Dhariwal, Alec Radford, and Oleg Klimov.
\newblock {Proximal Policy Optimization Algorithms}.
\newblock \emph{arXiv preprint arXiv:1707.06347}, 2017.
\newblock URL \url{https://arxiv.org/abs/1707.06347}.

\bibitem[Shang et~al.(2026)Shang, Li, Ma, Su, Jin, Wang, Jin, Zhang, Tang, Su, et~al.]{shang2026worldarena}
Yu~Shang, Zhuohang Li, Yiding Ma, Weikang Su, Xin Jin, Ziyou Wang, Lei Jin, Xin Zhang, Yinzhou Tang, Haisheng Su, et~al.
\newblock Worldarena: A unified benchmark for evaluating perception and functional utility of embodied world models.
\newblock \emph{arXiv preprint arXiv:2602.08971}, 2026.

\bibitem[Shao et~al.(2024)Shao, Wang, Zhu, Xu, Song, Bi, Zhang, Zhang, Li, Wu, and Guo]{shao2024deepseekmath}
Zhihong Shao, Peiyi Wang, Qihao Zhu, Runxin Xu, Junxiao Song, Xiao Bi, Haowei Zhang, Mingchuan Zhang, Y.~K. Li, Y.~Wu, and Daya Guo.
\newblock {DeepSeekMath: Pushing the Limits of Mathematical Reasoning in Open Language Models}.
\newblock \emph{arXiv preprint arXiv:2402.03300}, 2024.
\newblock URL \url{https://arxiv.org/abs/2402.03300}.

\bibitem[{Team Wan} et~al.(2025){Team Wan}, Wang, Ai, Wen, Mao, Xie, et~al.]{wan2025wan}
{Team Wan}, Ang Wang, Baole Ai, Bin Wen, Chaojie Mao, Chen-Wei Xie, et~al.
\newblock {Wan}: Open and advanced large-scale video generative models.
\newblock \emph{arXiv preprint arXiv:2503.20314}, 2025.
\newblock URL \url{https://arxiv.org/abs/2503.20314}.

\bibitem[van~den Oord et~al.(2017)van~den Oord, Vinyals, and Kavukcuoglu]{oord2017vqvae}
Aaron van~den Oord, Oriol Vinyals, and Koray Kavukcuoglu.
\newblock {Neural Discrete Representation Learning}.
\newblock In \emph{Advances in Neural Information Processing Systems}, 2017.
\newblock URL \url{https://arxiv.org/abs/1711.00937}.

\bibitem[Vijayakumar et~al.(2018)Vijayakumar, Cogswell, Selvaraju, Sun, Lee, Crandall, and Batra]{vijayakumar2018diverse}
Ashwin~K Vijayakumar, Michael Cogswell, Ramprasath~R. Selvaraju, Qing Sun, Stefan Lee, David Crandall, and Dhruv Batra.
\newblock {Diverse Beam Search for Improved Description of Complex Scenes}.
\newblock In \emph{Proceedings of the AAAI Conference on Artificial Intelligence}, 2018.
\newblock URL \url{https://ojs.aaai.org/index.php/AAAI/article/view/12340}.

\bibitem[Walke et~al.(2023)Walke, Black, Zhao, Vuong, Zheng, Hansen-Estruch, He, Myers, Kim, Du, Lee, Fang, Finn, and Levine]{walke2023bridge}
Homer~Rich Walke, Kevin Black, Tony~Z. Zhao, Quan Vuong, Chongyi Zheng, Philippe Hansen-Estruch, Andre~Wang He, Vivek Myers, Moo~Jin Kim, Max Du, Abraham Lee, Kuan Fang, Chelsea Finn, and Sergey Levine.
\newblock {BridgeData V2: A Dataset for Robot Learning at Scale}.
\newblock In \emph{Conference on Robot Learning}, pp.\  1723--1736, 2023.
\newblock URL \url{https://proceedings.mlr.press/v229/walke23a.html}.

\bibitem[Wu et~al.(2024)Wu, Yin, Feng, He, Li, Hao, and Long]{wu2024ivideogpt}
Jialong Wu, Shaofeng Yin, Ningya Feng, Xu~He, Dong Li, Jianye Hao, and Mingsheng Long.
\newblock {iVideoGPT: Interactive VideoGPTs are Scalable World Models}.
\newblock In \emph{Advances in Neural Information Processing Systems}, 2024.
\newblock URL \url{https://arxiv.org/abs/2405.15223}.

\bibitem[Wu et~al.(2025{\natexlab{a}})Wu, Yin, Feng, and Long]{wu2025rlvr}
Jialong Wu, Shaofeng Yin, Ningya Feng, and Mingsheng Long.
\newblock {RLVR-World: Training World Models with Reinforcement Learning}.
\newblock In \emph{Advances in Neural Information Processing Systems}, 2025{\natexlab{a}}.
\newblock URL \url{https://arxiv.org/abs/2505.13934}.

\bibitem[Wu et~al.(2025{\natexlab{b}})Wu, Yang, Po, Xu, Liu, Lin, and Wetzstein]{wu2025spatialmemory}
Tong Wu, Shuai Yang, Ryan Po, Yinghao Xu, Ziwei Liu, Dahua Lin, and Gordon Wetzstein.
\newblock {Video World Models with Long-term Spatial Memory}.
\newblock \emph{arXiv preprint arXiv:2506.05284}, 2025{\natexlab{b}}.
\newblock URL \url{https://arxiv.org/abs/2506.05284}.

\bibitem[Xiao et~al.(2024)Xiao, Tian, Chen, Han, and Lewis]{xiao2024streamingllm}
Guangxuan Xiao, Yuandong Tian, Beidi Chen, Song Han, and Mike Lewis.
\newblock {Efficient Streaming Language Models with Attention Sinks}.
\newblock In \emph{International Conference on Learning Representations}, 2024.
\newblock URL \url{https://arxiv.org/abs/2309.17453}.

\bibitem[Yan et~al.(2021)Yan, Zhang, Abbeel, and Srinivas]{yan2021videogpt}
Wilson Yan, Yunzhi Zhang, Pieter Abbeel, and Aravind Srinivas.
\newblock {VideoGPT: Video Generation using VQ-VAE and Transformers}.
\newblock \emph{arXiv preprint arXiv:2104.10157}, 2021.
\newblock URL \url{https://arxiv.org/abs/2104.10157}.

\bibitem[Yang et~al.(2024)Yang, Du, Ghasemipour, Tompson, Kaelbling, Schuurmans, and Abbeel]{yang2024unisim}
Sherry Yang, Yilun Du, Seyed Ghasemipour, Jonathan Tompson, Leslie Kaelbling, Dale Schuurmans, and Pieter Abbeel.
\newblock {Learning Interactive Real-World Simulators}.
\newblock In \emph{International Conference on Learning Representations}, 2024.
\newblock URL \url{https://arxiv.org/abs/2310.06114}.

\bibitem[Ye et~al.(2025)Ye, He, Yang, and Bian]{ye2025rlir}
Yang Ye, Tianyu He, Shuo Yang, and Jiang Bian.
\newblock {Reinforcement Learning with Inverse Rewards for World Model Post-training}.
\newblock \emph{arXiv preprint arXiv:2509.23958}, 2025.
\newblock URL \url{https://arxiv.org/abs/2509.23958}.

\bibitem[Yin et~al.(2026)Yin, Mei, Zheng, Yamane, Wang, Sceats, Bateman, Zha, Badithela, Shorinwa, and Majumdar]{yin2026playworld}
Tenny Yin, Zhiting Mei, Zhonghe Zheng, Miyu Yamane, David Wang, Jade Sceats, Samuel~M. Bateman, Lihan Zha, Apurva Badithela, Ola Shorinwa, and Anirudha Majumdar.
\newblock {PlayWorld: Learning Robot World Models from Autonomous Play}.
\newblock \emph{arXiv preprint arXiv:2603.09030}, 2026.
\newblock URL \url{https://arxiv.org/abs/2603.09030}.

\bibitem[Zhang \& Agrawala(2025)Zhang and Agrawala]{zhang2025framepack}
Lvmin Zhang and Maneesh Agrawala.
\newblock {Packing Input Frame Context in Next-Frame Prediction Models for Video Generation}.
\newblock \emph{arXiv preprint arXiv:2504.12626}, 2025.
\newblock URL \url{https://arxiv.org/abs/2504.12626}.

\bibitem[Zhang et~al.(2018)Zhang, Isola, Efros, Shechtman, and Wang]{zhang2018lpips}
Richard Zhang, Phillip Isola, Alexei~A. Efros, Eli Shechtman, and Oliver Wang.
\newblock {The Unreasonable Effectiveness of Deep Features as a Perceptual Metric}.
\newblock In \emph{IEEE Conference on Computer Vision and Pattern Recognition}, 2018.
\newblock URL \url{https://arxiv.org/abs/1801.03924}.

\bibitem[Zhao et~al.(2023)Zhao, Bai, Rao, Zhou, and Lu]{zhao2023unipc}
Wenliang Zhao, Lujia Bai, Yongming Rao, Jie Zhou, and Jiwen Lu.
\newblock {UniPC}: A unified predictor-corrector framework for fast sampling of diffusion models.
\newblock In \emph{Advances in Neural Information Processing Systems}, 2023.
\newblock URL \url{https://arxiv.org/abs/2302.04867}.

\bibitem[Zheng et~al.(2026)Zheng, Yu, Peng, Shi, Li, Zhang, Li, Wang, Lu, and Jia]{zheng2026memworld}
Zirui Zheng, Jiaqian Yu, Xiongfeng Peng, Jun Shi, Mingyi Li, Chao Zhang, Weiming Li, Dong Wang, Huchuan Lu, and Xu~Jia.
\newblock {Mem-World: Memory-Augmented Action-Conditioned World Models for Persistent Robot Manipulation}.
\newblock \emph{arXiv preprint arXiv:2606.18960}, 2026.
\newblock URL \url{https://arxiv.org/abs/2606.18960}.

\bibitem[Zhu et~al.(2025)]{zhu2025irasim}
Fangqi Zhu et~al.
\newblock {IRASim: A Fine-Grained World Model for Robot Manipulation}.
\newblock \emph{arXiv preprint arXiv:2406.14540}, 2025.
\newblock URL \url{https://arxiv.org/abs/2406.14540}.

\end{thebibliography}
